\newtheorem{theorem}{Theorem}
\newtheorem{lemma}{Lemma}[section]
\newtheorem{proposition}[theorem]{Proposition}
\newtheorem{definition}{Definition}
\newtheorem{remark}[lemma]{Remark}
\newcommand{\dt}{\Delta}
\definecolor{mycolor1}{RGB}{72,86,126}
\definecolor{mycolor2}{RGB}{142,171,192}
\definecolor{mycolor3}{RGB}{199,181,160}
\definecolor{mycolor4}{RGB}{183,191,153}
\definecolor{mypurple}{RGB}{128,0,128}
\title{Attractor Memory for Long-Term Time Series Forecasting: A Chaos Perspective}
\author{Jiaxi Hu\textsuperscript{\rm 1}, Yuehong Hu\textsuperscript{\rm 1}, Wei Chen\textsuperscript{\rm 1}, Ming Jin\textsuperscript{\rm 2}, Shirui Pan\textsuperscript{\rm 2}, Qingsong Wen\textsuperscript{\rm 3}, Yuxuan Liang\textsuperscript{\rm 1 }\thanks{Y. Liang is the corresponding author. Email: yuxliang@outlook.com}\\
    \textsuperscript{\rm 1}The Hong Kong University of Science and Technology (Guangzhou)\\
    \textsuperscript{\rm 2}Griffith University\hspace{0.2em}
    \textsuperscript{\rm 3}Squirrel Ai Learning, USA\\
     \{jhu110, yhu322, wchen110\}@connect.hkust-gz.edu.cn\\
     \{mingjinedu, qingsongedu\}@gmail.com \\ s.pan@griffith.edu.au
    }
\begin{document}

\maketitle
\begin{abstract}
In long-term time series forecasting (LTSF) tasks, an increasing number of works have acknowledged that discrete time series originate from continuous dynamic systems and have attempted to model their underlying dynamics. Recognizing the chaotic nature of real-world data, our model, \textbf{\textit{Attraos}}, incorporates chaos theory into LTSF, perceiving real-world time series as low-dimensional observations from unknown high-dimensional chaotic dynamical systems. Under the concept of attractor invariance, Attraos utilizes non-parametric Phase Space Reconstruction embedding along with a novel multi-resolution dynamic memory unit to memorize historical dynamical structures, and evolves by a frequency-enhanced local evolution strategy. Detailed theoretical analysis and abundant empirical evidence consistently show that Attraos outperforms various LTSF methods on mainstream LTSF datasets and chaotic datasets with only one-twelfth of the parameters compared to PatchTST. Code is available at \url{https://github.com/CityMind-Lab/NeurIPS24-Attraos}.
\end{abstract}

\section{Introduction}
In the intricate dance of time, time series unfold. Emerged from continuous dynamical systems \cite{park2021neural, gu2020hippo, rubanova2019latent} in the physical world, these series are meticulously collected at specific sampling frequencies. Like musical notes in a composition, they harmonize, revealing patterns that resonate through the symphony of temporal evolution. 
In this realm, Long-term Time Series Forecasting (LTSF) stands as one of the enduring focal points within the machine learning community, achieving widespread recognition in real-world applications, such as weather forecasting, financial risk assessment, and traffic prediction \cite{lim2021time,jin2023large,liu2023unitime,liang2024foundation, hu2024time}.

Building on the success of various deep LTSF models \cite{wu2021autoformer,zhou2022fedformer,zeng2023transformers,wu2022timesnet,lin2023segrnn,jin2023large}, which primarily leverage neural networks to learn temporal dependencies from discretely sampled data. Currently, researchers \cite{wang2022koopman,liu2023koopa} have been investigating the application of Koopman theory \cite{williams2015data,koltai2024koopman} in recovering continuous dynamical systems, which applies linear evolution operators to analyze dynamical system characteristics in a sufficiently high-dimensional Koopman function space.
Nevertheless, the existence of Koopman space relies on the deterministic system, posing challenges given the chaotic nature of real-world time series data, evidenced through the Maximal Lyapunov Exponent in Appendix \ref{F.1}.

In this paper, inspired by chaos theory \cite{devaney2018introduction}, we revisit LSTF tasks from a chaos perspective: Linear or complex nonlinear dynamical systems exhibit stable patterns in their trajectories after sufficient evolution, known as attractors. As illustrated in Figure \ref{attr}(a), attractors can be classified into four types: Fixed Point, indicating stable, invariant systems; Limited cycle, representing periodic behavior; Limited Toroidal, exhibiting quasi-periodic behavior with non-intersecting rings in a 2D plane, reflecting temporal distribution shifts; and Strange Attractor, characterized by nonlinear behavior and complex, non-topological shapes.
Supported by chaos theory, we can transcend the limitations of deterministic dynamical systems to construct generalized dynamical system models. Figure \ref{attr}(b-c) showcases various classical chaotic dynamical systems and the dynamical trajectories of real-world LTSF datasets using the phase space reconstruction method \cite{deyle2011generalized}. Notably, the dynamical system trajectories in these LTSF datasets exhibit fixed structures akin to those in typical chaotic systems.

Given this chaos perspective, we consider real-world time series as stemming from an unidentified high-dimensional underlying chaotic system, broadly encompassing nonlinear behaviors beyond periodicity. Our focus centers on recovering continuous chaotic dynamical systems from discretely sampled data for LTSF tasks, with the goal of predicting future time steps through the lens of attractor invariance.
Specifically, this problem can be decomposed into two key questions: \textbf{\textit{(i) how to model the underlying continuous chaotic dynamical system based on discretely sampled time series data;}} \textbf{\textit{(ii) how to enhance forecasting performance by utilizing the attractors within the system.}}

\begin{figure}[t]
\centering
\includegraphics[width=1\columnwidth]{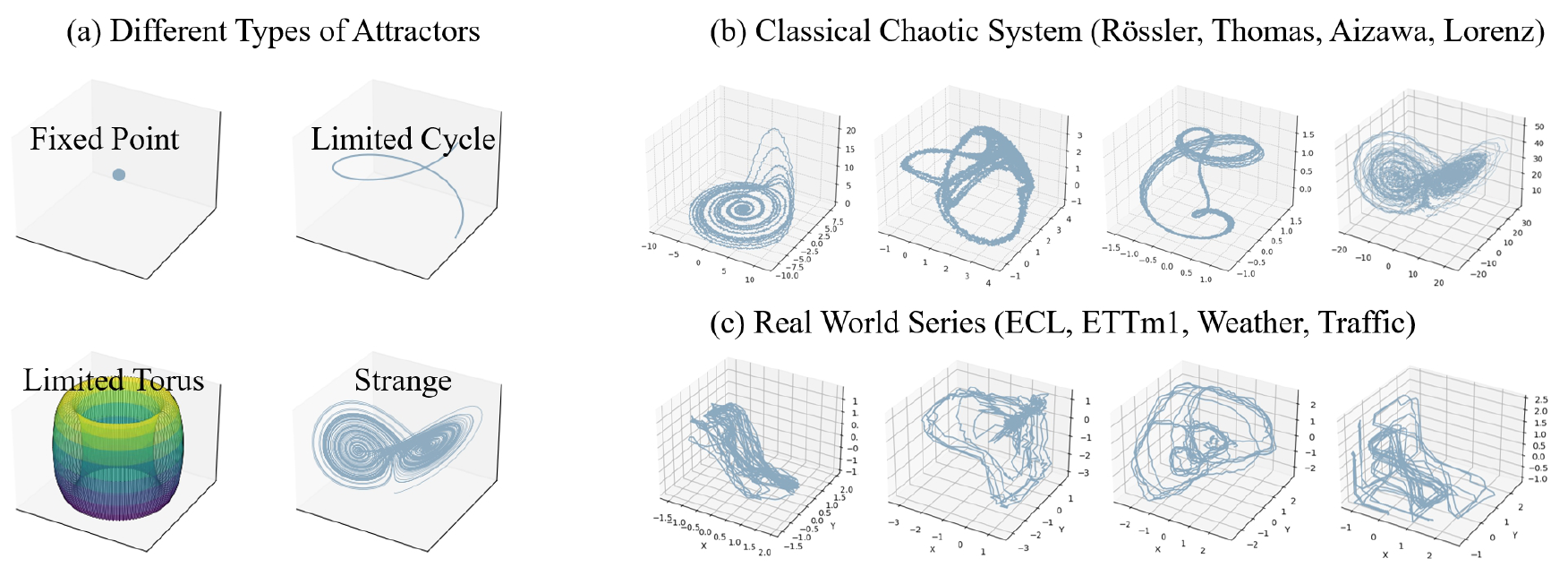}
\caption{(a): Classical chaotic systems with noise. (b): dynamical system structure of real-world datasets. (c): Different types of Attractors. See more figures in Appendix \ref{more psr fig}.}
\label{attr}
\vspace{-1em}
\end{figure}

In this context, \textbf{\textit{Attraos}} emerges with the goal of capturing the underlying order within the seeming ch\underline{aos} via \underline{attr}actors.
For tackling the first question, we employ a non-parametric phase space reconstruction method to recover the temporal dynamics and propose a \emph{Multi-resolution Dynamic Memory Unit} (MDMU) to memorize the structural dynamics within historical sampled data. 
Specifically, as polynomials have been proven to be universal approximators for dynamical systems \cite{bollt2021explaining}, MDMU expands upon the work of the State Space Model (SSM) \cite{gu2021efficiently, gu2023mamba, hu2024time} to different orthogonal polynomial subspaces. This allows for memorizing diverse dynamical structures that encompass various attractor patterns, while theoretically minimizing the boundary of attractor evolution error.

To address the second question, we devise a frequency-enhanced local evolution strategy, which is built upon the recognition that attractor differences are amplified in the frequency domain, as observed in the field of neuroscience \cite{chen2014phase,gupta2020r,chen2021high}. Concretely, for dynamical system components that belong to the same attractor, we apply a consistent evolution operator to derive their future states in the frequency domain. Our contributions can be summarized as follows:
\begin{itemize}[leftmargin=*]
    \item \textbf{A Chaos Lens on LTSF}. We incorporate chaos theory into LTSF tasks by leveraging the concept of attractor invariance, leading to a principal way to model the underlying continuous dynamics
    \item \textbf{Efficient Dynamic Modeling}. Our model Attraos employs a non-parametric embedding to obtain high-dimensional dynamical representations, leverages MDMU to capture the multi-scale dynamical structure, and performs the evolution in the frequency domain. Remarkably, Attraos achieves this with only about one-twelfth the parameter count of PatchTST. Furthermore, we utilize the Blelloch scan algorithm \cite{blelloch1990prefix} to enable efficient computation of the MDMU.
    \item \textbf{Empirical Evidence}. Various experiments validate the superior performance of Attraos. Besides Leveraging the properties of chaotic dynamical systems, we explore their extended applications in LTSF tasks, focusing on chaotic evolution, modulation, representation, and reconstruction.
\end{itemize}

\section{Preliminary}
\noindent \textbf{Attractor in Chaos Theory}.~
In chaos theory, the interaction of three or more variables exhibiting periodic behavior gives rise to a complex dynamical system characterized by chaos. According to Takens's theorem \cite{takens2006detecting,noakes1991takens}, assuming an ideal dynamical system $\mathcal{F}: \mathcal{M}\to \mathcal{M}$ that ``lives'' on attractor $\mathcal{A}$ in manifold space $\mathcal{M}$ which locally $\mathcal{C}^{N}$ (N-times differentiable), time series data $\{z_i\}\in \mathbb{R}$ can be interpreted as the observation of it by an unknown observation function $h$. To explore the properties of the unknown ideal dynamical system, we can employ the phase space reconstruction (PSR) method to establish an approximation $\mathcal{K}: \mathbb{R}^m\to \mathbb{R}^m$ which lives in differential homomorphism attractor $\tilde{\mathcal{A}}$ in the Euclidean space with suitable dimension $m$ \cite{deyle2011generalized}. The whole process is illustrated in Equation \eqref{eq: psr}, where $\{z_i\}$, $\{u_i\}$ are the sampled data from two dynamical systems. Strictly speaking, in our paper, the chaotic attractor structure $\tilde{\mathcal{A}}=\{\tilde{\mathcal{A}}_i\}$ we focused on is in phase space $\mathbb{R}^m$. To facilitate understanding, we further provide a visual example of the Lorenz96 system in Appendix \ref{simulation lorenz96}.

In the forecasting stage, the local prediction method emerges as a prominent one: $u_{i+1} = \mathcal{K}^{(i)}(u_i)$, where the local evolution $\mathcal{K}^{(i)}$ can be either linear or nonlinear neural network \cite{vlachos2008state,an2011wind,shahi2022prediction}, with the parameter being shared among the points in the neighborhood of $u_i$ or belong to the same local attractor. Considering the universal approximation capabilities of polynomials for dynamical systems, we leverage the polynomial to describe the chaotic dynamical structures.\\
\small
\begin{minipage}{.49\linewidth}%
\begin{equation}
\begin{array}{ccc}
\label{eq: psr}
{a}_i \in \mathcal{A} \subset \mathcal{M} & \stackrel{\mathcal{F}}{\longmapsto} & {a}_{i+1} \in \mathcal{A} \subset \mathcal{M} \\
\downarrow_h & & \downarrow_h\\
z_i \in \mathbb{R} & & z_{i+1} \in \mathbb{R} \\
\downarrow_{PSR} & & \downarrow_{PSR} \\
u_i \in \tilde{\mathcal{A}} \subset \mathbb{R}^m & \stackrel{\mathcal{K}}{\longmapsto} & u_{i+1} \in \tilde{\mathcal{A}} \subset \mathbb{R}^m
\end{array}
\end{equation}
\end{minipage}
\begin{minipage}{.49\linewidth}%
\begin{subequations}\label{eq:ssm}
\begin{align}
  \label{eq:ssm-a}
  x'(t) &= \bm{A}x(t) + \bm{B}u(t)  \\
  \label{eq:ssm-conv}
  x(t) &= (K \ast u)(t) \\
  \label{eq:ssm-kernel}
  K(t) &= e^{t\bm{A}} \bm{B}
\end{align}
\end{subequations}
\end{minipage}
\normalsize

\noindent \textbf{Polynomial Projection with Measure Window}.~ 
We only consider the first part of the SSM \eqref{eq:ssm-a}, which is a parameterized map that transforms the input $u(t)$ into an $N$-dimensional latent space. According to $\mathsf{Hippo}$ \cite{gu2020hippo}, it is mathematically equivalent to: given an input $u(s)$, a set of orthogonal polynomial basis $\phi_n(t,s)$ that $\int_{-\infty}^t \phi_m(t,s)\phi_n(t,s)\mathrm{d}s=\delta_{m,n}$, and an inner product probability measure $\mu(t,s)$. This enables us to project the input $u(s)$ onto the polynomial basis along time dimension \eqref{hippo}, and we can combine $\phi_n(t,s)\omega(t,s)$ as a kernel $K_n(t,s)$ \eqref{ssm kernel}. When $\omega(t,s)$ is defined in a time window $\mathbbm{I}[t,t+\theta]$, it represents approximating the input over each window $\theta$.
\\
\begin{minipage}{.5\linewidth}%
\begin{equation}
\langle u, \phi_n\rangle_{\mu} = \int_{-\infty}^t u(s)\phi_n(t,s)\omega(t,s)\mathrm{d}s
\label{hippo}
\end{equation}
\end{minipage}
\begin{minipage}{.5\linewidth}%
\begin{equation}
x_n(t) = \int u(s)K_n(t,s)\mathbbm{I}(t,s)\mathrm{d}s
\label{ssm kernel}
\end{equation}
\end{minipage}

 When the basis and measure are solely dependent on time $t$, it can be expressed in a convolution form \eqref{eq:ssm-conv}. In this paper, we will utilize this property to project the dynamical trajectories $\{u_n\}$ in phase space onto the polynomial spectral domain with kernel $e^{t\bm{A}} \bm{B}$ \eqref{eq:ssm-kernel} for characterization.
\section{Theoretical Analysis \& Methods}
\begin{figure*}[!t]
\centering
{\includegraphics[width=1\columnwidth]{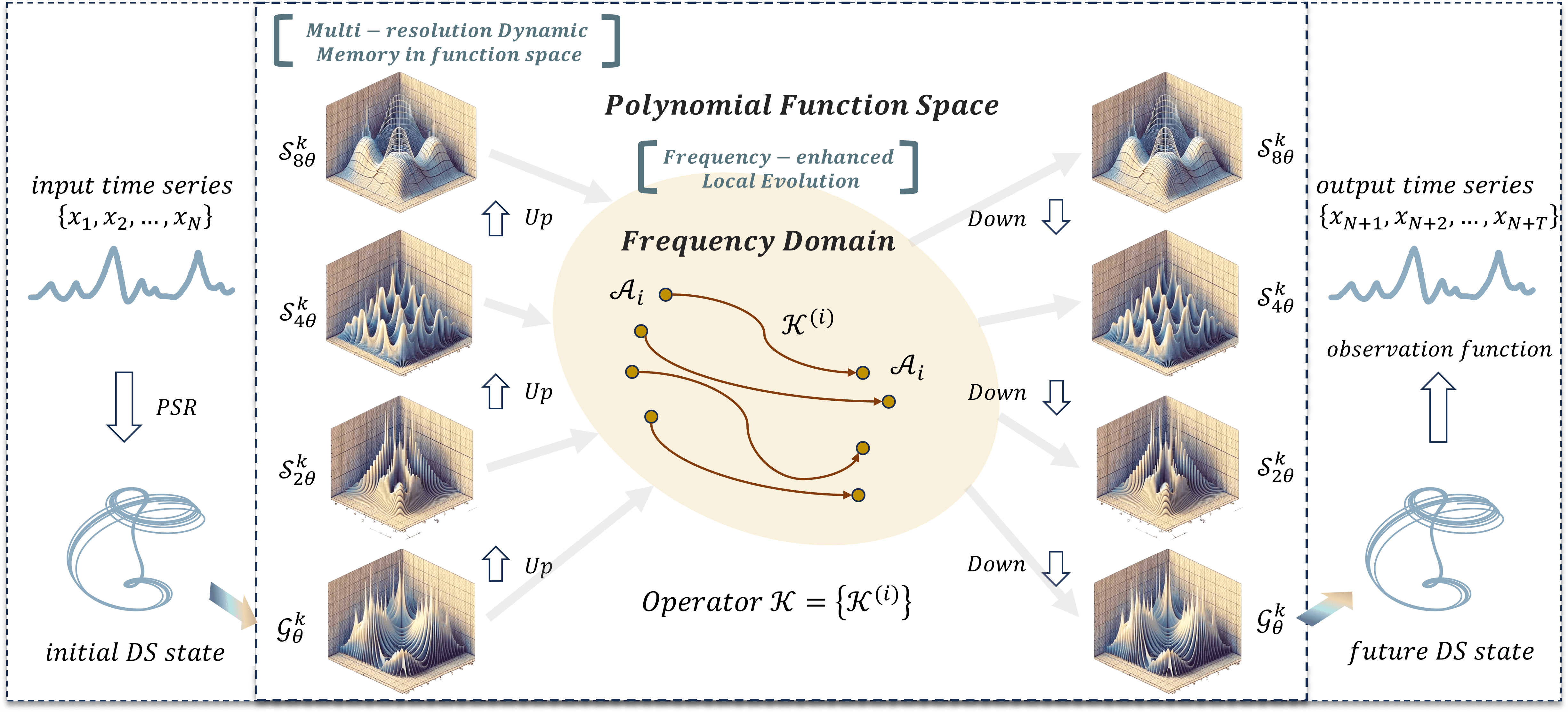}}
\caption{Overall architecture of Attraos. Initially, the PSR technique is employed to restore the underlying dynamical structures from historical data $\{z_i\}$. Subsequently, the dynamical system trajectory is fed into MDMU, projected onto polynomial space $\mathcal{G}_\theta^N$ using a time window $\theta$ and polynomial order $N$. Gradually, a hierarchical projection is performed to obtain more macroscopic memories of the dynamical system structure. Finally, local evolution operator $\mathcal{K}^{(i)}$ in the frequency domain is employed to obtain future state, thereby for the prediction.}
\label{model}
\end{figure*}
The overall structure of Attraos is illustrated in Figure \ref{model}. In this section, we provide a comprehensive description of its components, including the Phase Space Reconstruction embedding, the Multi-Resolution Dynamic Memory Unit (MDMU), and the frequency-enhanced local evolution, as well as the efficient computational methods employed.

\subsection{Phase Space Reconstruction}
According to chaos theory, the initial step involves constructing a topologically equivalent dynamical structure through the PSR. The preferred embedding method is typically the Coordinate Delay Reconstruction \cite{noakes1991takens}, which does not rely on any prior knowledge of the underlying dynamical system. By utilizing the discretely sampled data $\{z_i\}$ and incorporating two hyper-parameters, namely, embedding dimension $m$ and time delay $\tau$, a high-dimensional dynamical trajectory $\{u_i\}$ in phase space can be constructed by Eq. \eqref{psr}.

\begin{minipage}{.5\linewidth}%
\begin{equation}
    u_i = (z_{i-(m-1)\tau},z_{i-(m-2)\tau},\cdots,z_i)
\label{psr}
\end{equation}
\end{minipage}
\begin{minipage}{.5\linewidth}%
\begin{equation}
    \mathcal{K}_{patch} = \operatorname{Unfold}~(\mathcal{K}, p, p)
    \label{patch}
\end{equation}
\end{minipage}

For multivariate time series data with $C$ variables, we have observed considerable variations in the Lyapunov exponents of each variable, hence a channel-independent strategy \cite{nie2022time} is employed to construct a unified dynamical system $\mathcal{K}\subset\mathbb{R}^{m}$. To accelerate model convergence and reduce complexity, we apply non-overlapping patching to obtain $\mathcal{K}_{patch}$ \eqref{patch}. We denote the number of patches as $L$, using $u\in\mathbb{R}^{B\times L\times D}$ to represent the tensor used for computing, where $D=mp$. The determination of $m$ and $\tau$ is achieved by applying the CC method \cite{kim1999nonlinear} as shown in Appendix \ref{psr details}.
\begin{remark}
This represents the pioneering non-parametric embedding in LTSF, effectively reducing the model parameters in the embedding and output projection process ($m$ is typically single-digit).
\end{remark}

\begin{remark}
In a large body of dynamical literature \cite{tan2023selecting,skokos2016chaos,hu2024toward,koltai2024koopman}, local linear approximation serves as an effective method for modeling dynamical systems, providing a basis for the effectiveness of the patching operations in this paper.
\end{remark}

\subsection{Dynamical Representation by MDMU}
\begin{proposition}\label{A1111}
$\bm{A} = \mathsf{diag}\{-1, -1, \dots\}$ is a rough approximation of normal Hippo-LegT \cite{gu2020hippo} matrix, which utilizes polynomial projection under a finite measure window (Lebesgue measure).
\end{proposition}

\begin{remark}
    All proofs in this section can be found in Appendix \ref{proofs}.
\end{remark}
\label{3.2}

Adhere to Mamba \cite{gu2023mamba}, we generate $\bm{B}$ and measure window $\theta$ by linear layer, and propose a novel parameterized method (Proposition \ref{A1111}) for $\bm{A}$ to instantiate Eq. \eqref{eq:ssm-a}:
\begin{equation}
    \bm{A}=\mathsf{Broadcast}_D(\mathsf{diag}\{-1, -1, \dots\}),\quad\quad\bm{B}=\mathsf{Linear}_{\bm{B}}(u),\quad\quad\bm{\dt}/\theta=\mathsf{softplus}(\mathsf{Linear}_{\bm{\dt}}(u)),
\end{equation}
where $\bm{A}\in\mathbb{R}^{D\times N}$ represents the dynamical characteristics of the system's forward evolution in the polynomial space. Due to its diagonal nature, its representational capacity is comparable to $\mathbb{R}^{D\times N \times N}$; Matrix $\bm{B}\in\mathbb{R}^{B\times L \times N}$ controls the process of projecting $u$ onto the polynomial domain like a gate mechanism; The learnable approximation window $\bm{\dt}\in\mathbb{R}^{B\times L \times D}$, similar to an attention mechanism, enables adaptive focus on specific dynamical structures (attractors).

\begin{remark}
In the Hippo theory, the measure window is denoted by $\theta$, while in SSMs \cite{gu2023mamba,gu2021efficiently}, the discrete step size is represented by $\bm{\dt}$. These two terms can be considered approximately equivalent.
\end{remark}

As shown in Figure \ref{ALL}(a), in practical computations, we need to discretize Eq. \eqref{eq:ssm-a} to fit the discrete dynamical trajectories. We apply zero-order hold (ZOH) discretization \cite{gu2020hippo} to matrix $\bm{A}$, while opting for a combination of Forward Euler discretization for $\bm{B}$ (instead of the commonly used $\overline{\bm{B}}=(\Delta \bm{A})^{-1}(\exp (\Delta \bm{A})-\bm{I}) \cdot \Delta \bm{B}$ in SSMs), resulting in a more concise representation:
\begin{equation}
    (\textbf{ZOH}):\bm{\overline{A}} = \exp(\bm{\dt} \bm{A}) ,\quad \quad \quad (\textbf{Forward Euler}):\bm{\overline{B}} = \bm{\dt} \bm{B}.
    \label{dicret}
\end{equation}

\begin{figure}[!t]
\begin{adjustbox}{width=1.05\columnwidth, center}
\centering
{\includegraphics[width=1\columnwidth]{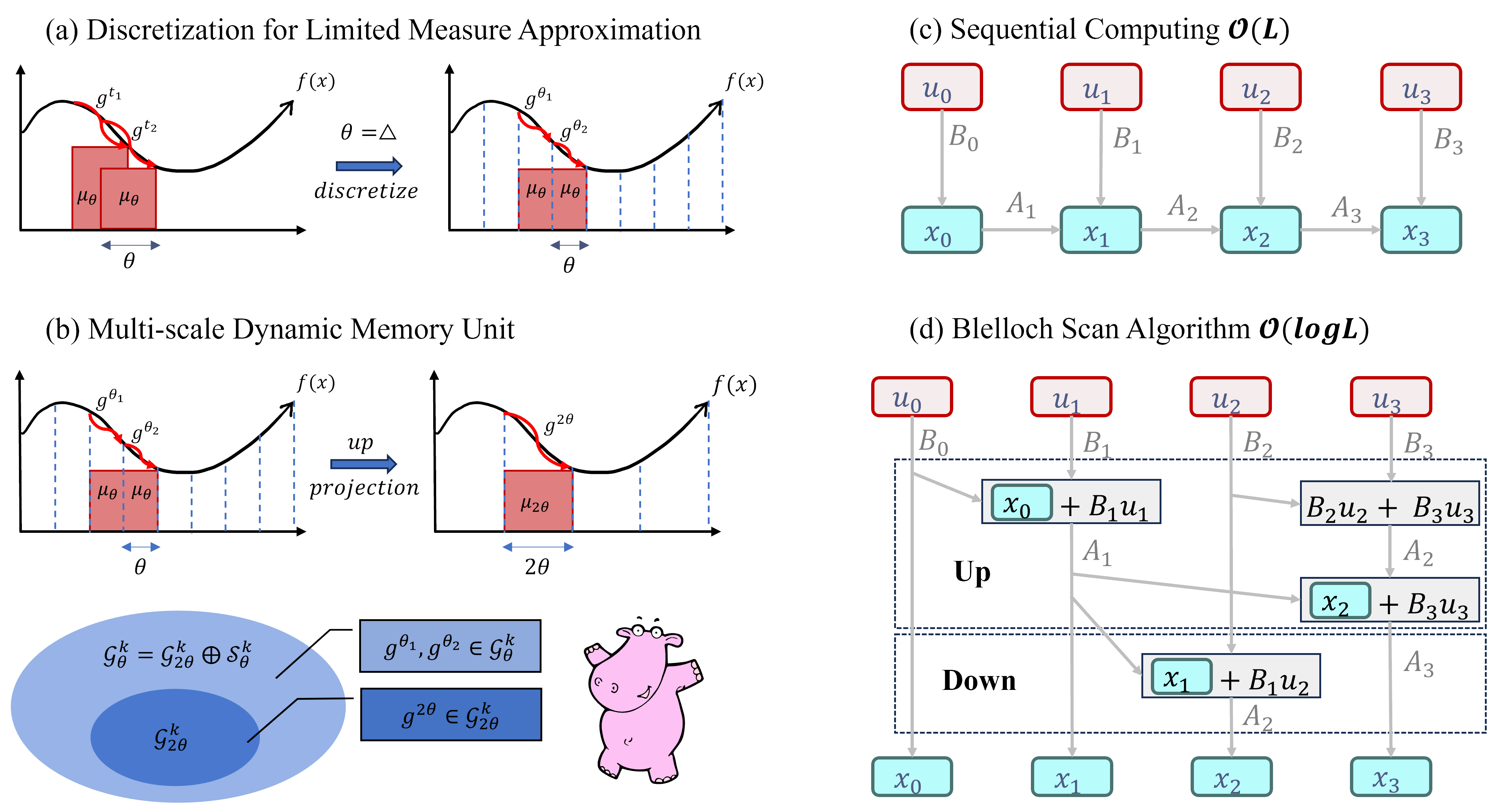}}
\end{adjustbox}
\caption{(a) Discretization of continuous polynomial approximation for sequence data. $g$ represents the optimal polynomial constructed from polynomial bases. (b) MDMU projects the dynamical structure onto different orthogonal subspaces $\mathcal{G}$ and $\mathcal{S}$. (c) Sequential computation for Eq. \eqref{eq:ssm-a} in $\mathcal{O}(L)$ time complexity. (d) Blelloch tree scanning for Eq. \eqref{eq:ssm-a} in $\mathcal{O}(logL)$ by storing intermediate results.}
\label{ALL}
\end{figure}

Next, we can project the dynamical trajectory $u$ onto the polynomial domain using the discretized kernel to obtain dynamical representation $x\in\mathbb{R}^{B\times L \times D \times N}$ \eqref{get x}. This process can be achieved with $\mathcal{O}(L)$ complexity by employing sequential computation (Figure \ref{ALL}(c)) or by utilizing the Blelloch scan (Figure \ref{ALL}(d)) to store intermediate results with $\mathcal{O}(logL)$ complexity.
\begin{equation}
   \overline{\bm{K}} =\left(\overline{\bm{B}}, \overline{\bm{A B}}, \ldots, \overline{\bm{A}}^{L-1} \overline{\bm{B}}\right) ,\quad \quad \quad x =u * \overline{\bm{K}}.
    \label{get x}
\end{equation}
Up to this point, the construction of the underlying continuous dynamical system $\mathcal{K}$ and its application to discretely sampled data ${z_n}$ have been established. However, in this scenario, we are still limited to a single representation of the dynamical structures with measure window $\theta$. The strange attractors, on the other hand, are often composed of multiple fundamental topological structures. Therefore, we require a multi-scale hierarchical representation of dynamical structures to capture their complexity.

To address this, as illustrated in Figure \ref{ALL}(b), we progressively increase the length of the window $\theta$ by powers of 2. The region previously approximated by $g^{\theta_1}\in \mathcal{G}_\theta^N$ (left half) and $g^{\theta_2}\in \mathcal{G}_{\theta}^N$ (right half) will now be approximated by $g^{2\theta}\in \mathcal{G}_{2\theta}^N$. 

Since the piecewise polynomial function space can be defined as the following form:
\begin{equation}
\mathcal{G}_{(r)}^k= 
\begin{cases}
g \mid \operatorname{deg}(g)<k, &x \in\left(2^{-r} l, 2^{-r}(l+1)\right)\\ 
0, &\text{otherwise}
\end{cases},
\label{Allocation}
\end{equation}
with polynomial order $k \in \mathbb{N}$, piecewise scale $r \in \mathbb{Z}^{+} \cup\{0\}$, and piecewise internal index $l\in\{0,1,...,2^r-1\}$, it is evident that $dim(\mathcal{G}_{(r)}^k)=2^rk$, implying that $\mathcal{G}_\theta^k$ possesses a superior function capacity compared to $\mathcal{G}_{2\theta}^k$. All functions in $\mathcal{G}_{2\theta}^k$ are encompassed within the domain of $\mathcal{G}_\theta^k$. Moreover, since $\mathcal{G}_{\theta}^k$ and $\mathcal{G}_{2\theta}^k$ can be represented as space spanned by basis functions $\{\phi_i^{\theta}(x)\}$ and $\{\phi_i^{2\theta}(x)\}$, any function including the basis function within the $\mathcal{G}_{2\theta}^k$ space can be precisely expressed as a linear combination of basis functions from the $\mathcal{G}_\theta^k$ space with a proper tilted measure ${\mu_{2\theta}}$:
\begin{equation}
    \phi_i^{2\theta}(x) = \sum_{j=0}^{k-1} H_{i j}^{\theta_1}\phi_i^{\theta}(x)_{x\in{[\theta_1]}}+ \sum_{j=0}^{k-1} H_{i j}^{\theta_2}\phi_i^{\theta}(x)_{x\in{[\theta_2]}},
    \label{phi_proj}
\end{equation}
The projection coefficients on these two spaces can be mutually transformed using the linear projection matrix $H$ and its inverse matrix $H^\dag$ based on the odd or even positions along the $L$ dimension in $x$.
\begin{equation}
x^{2\theta}=H^{\theta_1} x^{\theta_1} +H^{\theta_2}x^{\theta_2}, \quad\quad\quad\quad x^{2\theta}\in\mathbb{R}^{B\times L/2 \times D \times N}
\label{eqn_c_proj}
\end{equation}

\begin{remark}
    Although $\bm{\dt}$ is akin to an attention mechanism leads to different measure windows, ie., $\theta_1\neq \theta_2$, it still maintains the linear projection property for up and down projection:$\mathcal{G}_{\theta_1},\mathcal{G}_{\theta_2}\leftrightarrow \mathcal{G}_{\theta_1+\theta_2}$. In our illustration, we have used a unified measure window for simplicity.
\end{remark}

\begin{theorem} \textbf{(Approximation Error Bound)}
    Function $f:[0,1] \in \mathbb{R}$ is $k$ times continuously differentiable, the piecewise polynomial $g\in\mathcal{G}_{r}^N$ approximates $f$ with mean error bounded as:
    \small
$$
\left\|f-g\right\| \leq 2^{-r N} \frac{2}{4^N N !} \sup _{x \in[0,1]}\left|f^{(N)}(x)\right|.
$$
\label{mean error}
\end{theorem}
Iteratively repeating this process enables us to model the dynamical structure from a more macroscopic perspective. Theorem \ref{mean error} indicates that under this approach, the polynomial projection error has convergence of order $N$. Hyper-parameter analysis can be found in Figure \ref{fig:add}.
\begin{remark}
    When the weight function is uniformly equal across the dynamical structure, $H^{\theta_1}$ and $H^{\theta_2}$ are shared in each projection level as the projection matrix for the left and right interval.
\end{remark}

\begin{theorem} The mean \textbf{attractor evolution error} $\left\|\mathcal{K}\circ \tilde{\mathcal{A}} - \tilde{\mathcal{A}}\right\|$ of evolution operator $\mathcal{K}=\{\mathcal{K}^{(i)}\}$ is bounded by
$\|\mathcal{K} \circ \tilde{\mathcal{A}}-\tilde{\mathcal{A}}\|(N-1) \mathcal{E}\left(-\beta \nabla_i+1\right)$,
with the number of random patterns $N\ge\sqrt{p}c\frac{d-1}{4}$ stored in the system by interaction paradigm $\mathcal{E}$ in an ideal spherical retrieval paradigm. 
\label{evolution error}
\end{theorem}

Based on this hierarchical projection, we additionally want to uphold the constancy of attractor patterns throughout the evolution, which is equivalent to minimizing attractor evolution errors. According to Theorem \ref{evolution error}, it is imperative to ensure the separation between attractors, denoted as $\nabla_i:=\min _{j, j \neq i}\left(\tilde{\mathcal{A}}_i^T \tilde{\mathcal{A}}_i-\tilde{\mathcal{A}}_i^T \tilde{\mathcal{A}}_j\right)$, is sufficiently large. While there is an intersection between the $\mathcal{G}_\theta^N$ and $\mathcal{G}_{2\theta}^N$ spaces, which limits the attainment of a sufficiently large $\nabla$. To address this issue, we define an orthogonal complement space as $\mathcal{G}_\theta^N = \mathcal{S}_\theta^N \bigoplus \mathcal{G}_{2\theta}^N$, to establish a series of orthogonal function spaces $\{\mathcal{S}_\theta^N, \mathcal{S}_{2\theta}^N,..., \mathcal{S}_{2^L\theta}^N, \mathcal{G}_{2^L\theta}^N\}$. We can extend Eq. \eqref{eqn_c_proj} as: 
\begin{equation}
\begin{aligned}
    x^{2\theta}=H^{\theta_1} x^{\theta_1} +H^{\theta_2}x^{\theta_2}, ~~~
    s^{2\theta}=G^{\theta_1} x^{\theta_1} +G^{\theta_2}x^{\theta_2},
\end{aligned}
\label{up_proj}
\end{equation}
\begin{equation}
\begin{aligned}
    x^{\theta_1}=H^{\dag\theta_{1}} x^{2\theta} +G^{\dag\theta_1}s_t^{2\theta}, ~~~
    x^{\theta_2}=H^{\dag\theta_{2}} x^{2\theta} +G^{\dag\theta_2}s_t^{2\theta}.
\end{aligned}
\label{down_proj}
\end{equation}

Theorem \ref{complete} states that the coarsest-grained $\mathcal{G}$ space, along with a series of orthogonal complement $\mathcal{S}$ spaces, can approximate any given dynamical system structure with finite error bounds in Theorem \ref{mean error}.
\begin{theorem}
\textbf{(Completeness in $L^2$ space)} The orthonormal system
$B_N=\{\phi_j:j=1, \ldots, N\} \cup\{\psi_j^{rl}:j=1, \ldots, N ; r=0,1,2, \ldots ; l=0, \ldots, 2^r-1\}$
spans $L^2[0,1]$.
\label{complete}
\end{theorem}
\begin{remark}
    $\bm{H},\bm{G},\bm{H}^{\dag},\bm{G}^{\dag}\in\mathbb{R}^{N\times N}$ are obtained by applying Gaussian Quadrature to Legendre polynomials \cite{gupta2021multiwavelet}. The gradients of these matrices are subsequently utilized for adaptive optimization.
\end{remark}
The hierarchical projection can be implemented explicitly using iterative display or can be efficiently computed through an implicit implementation proposed in the following Section \ref{fast comput}.
\subsection{System Evolution by Attractors}
\label{sec3.3}
Following chaos theory, we employ a local evolution method ${x}_{i+1} = \mathcal{K}^{(i)}({x}_n)$ to forecast future state. Given dynamics representations $s/x$, the subsequent step involves partitioning the attractor and utilizing the operator $\mathcal{K}^i$ belonging to ${A}_i$ for system evolution. We present three evolution strategies:

\begin{itemize}[leftmargin=*]
    \item \textit{\textbf{Direct Evolution:}} We employ each representation $x_t/s_t\in\mathbb{R}^{D\times N}$ as the feature to partition adjacent points in the dynamical system trajectory into the same attractor using the K-means method. Subsequently, we evolve these points using a local operator $\mathcal{K}^{(i)}$.
    \item \textit{\textbf{Frequency-enhanced Evolution:}} Inspired by neuroscience \cite{chen2021high}, where attractor structures are amplified in the frequency domain, we first obtain the frequency domain representation of the dynamical structure through Fourier transformation. Considering the dominant modes as attractors, we employ ${\mathcal{K}^{(i)}}$ to drive the system's evolution in the frequency domain.
    \item \textit{\textbf{Hopfield Evolution:}} Hopfield networks \cite{hopfield2007hopfield} are designed specifically for attractor memory retrieval (see Appendix \ref{hopfield}). In our approach, we utilize a modern version \cite{ramsauer2020hopfield} of the Hopfield network for the evolution of dynamical systems, employing cross-attention operations. We treat the trainable attractor library as the \emph{Key} and \emph{Value}, while different scales of dynamical structure representations $\{s^{\theta}, s^{2\theta},..., s^{2^{L}\theta}, x^{2^{L}\theta}\}$ serve as the \emph{Query}, enabling sequence-to-sequence evolution.
\end{itemize}

Our experimental results in Table \ref{tab: strategy} demonstrate that the frequency-enhanced evolution strategy outperforms others comprehensively, and we introduce two implementation approaches:

\noindent \textbf{Explicit Evolution}.
Initially, the finest dynamical representation $x^{\theta}$ is obtained using $\overline{\bm{K}}$, and then expanded to multiple scales ${s^{\theta}, s^{2\theta},..., s^{2^{L}\theta}, x^{2^{L}\theta}}$. By applying the Fourier Transform, we select $M$ low-frequency components as the primary modes. Each mode undergoes linear evolution $\mathcal{K}^{(i)}=W_i\in\mathbb{C}^{N\times N}$, followed by back projection to the original scale. 

\noindent \textbf{Implicit Evolution}.
\label{fast comput}
However, explicit evolution methods inevitably increase the time complexity. To address this issue, inspired by the Blelloch algorithm and hierarchical projection, which both utilize \textbf{\textit{tree-like computational graphs}}, we propose an implicit evolution method.

A Blelloch scan \cite{blelloch1990prefix,smith2022simplified} (Figure \ref{ALL}(d)) defines a binary operator $q_i \bullet q_j:=(q_{j, a} \odot q_{i, a},~q_{j, a} \otimes q_{i, b}+q_{j, b})$ used to compute the linear recurrence $x_k=\overline{\bm{A}} x_{k-1}+\overline{\bm{B}} u_k$. We take $L=4$ as an example:

\scriptsize
\begin{minipage}{.49\linewidth}%
\begin{center}
\textbf{\normalsize Up sweep for even position}
\end{center}
$$
\begin{aligned}
&\begin{aligned}
& r_2=c_1 \bullet c_2=\left(\overline{\mathbf{A}}, \overline{\mathbf{B}} u_1\right) \bullet\left(\overline{\mathbf{A}}, \overline{\mathbf{B}} u_2\right)=\left(\overline{\mathbf{A}}^2, \overline{\mathbf{A B}} u_1+\overline{\mathbf{B}} u_2\right) \\
& q_4=c_3 \bullet c_4=\left(\overline{\mathbf{A}}, \overline{\mathbf{B}} u_3\right) \bullet\left(\overline{\mathbf{A}}, \overline{\mathbf{B}} u_4\right)=\left(\overline{\mathbf{A}}^2, \overline{\mathbf{A B}} u_3+\overline{\mathbf{B}} u_4\right)
\end{aligned}\\
&\begin{aligned}
r_4=r_2 \bullet q_4 & =\left(\overline{\mathbf{A}}^2, \overline{\mathbf{A B}} u_1+\overline{\mathbf{B}} u_2\right) \bullet\left(\overline{\mathbf{A}}^2, \overline{\mathbf{A B}} u_3+\overline{\mathbf{B}} u_4\right) \\
& =\left(\overline{\mathbf{A}}^4, \overline{\mathbf{A}}^3 \overline{\mathbf{B}} u_1+\overline{\mathbf{A}}^2 \overline{\mathbf{B}} u_2+\overline{\mathbf{A B}} u_3+\overline{\mathbf{B}} u_4\right) .
\end{aligned}
\end{aligned}
$$
\end{minipage}
\hspace{3pt}
\begin{minipage}{.49\linewidth}%
\begin{center}
\textbf{\normalsize Down sweep for odd position}
\end{center}
$$
\begin{aligned}
&\begin{aligned}
& r_1=r_0 \bullet c_1=(I, 0) \bullet\left(\overline{\mathbf{A}}, \overline{\mathbf{B}} u_1\right)=\left(\overline{\mathbf{A}}, \overline{\mathbf{B}} u_1\right)
\end{aligned}\\
&\begin{aligned}
r_3=r_2 \bullet c_3&=\left(\overline{\mathbf{A}}^2, \overline{\mathbf{A B}} u_1+\overline{\mathbf{B}} u_2\right) \bullet\left(\overline{\mathbf{A}}, \overline{\mathbf{B}} u_3\right) \\
&=\left(\overline{\mathbf{A}}^3, \overline{\mathbf{A}}^2 \overline{\mathbf{B}} u_1+\overline{\mathbf{A B}} u_2+\overline{\mathbf{B}} u_3\right)
\end{aligned}
\end{aligned}
$$
$$\begin{aligned}
x_1,x_2,x_3,x_4 = r_{1}[R],r_{2}[R],r_{3}[R],r_{4}[R].
\end{aligned}$$
\end{minipage}
\normalsize

The process commences by computing the values of variable $x$ at even positions through an upward sweep. Subsequently, these even position values are employed during a downward sweep to calculate the values at odd positions. The corresponding value of $x$ resides in the right node of $r$. Thus, we can modify the binary operator as $q_i \bullet q_j:=(q_{j, a} \odot q_{i, a},~H_i\otimes(q_{j, a} \otimes q_{i, b}+q_{j, b}))$, thereby implicitly integrating hierarchical projection into the scanning operation. This leads to the scales of $x_i$:
\[
scale(x_i) = \begin{cases}
0 & \text{if } i = 0 \\
scale(x_{i-1}) + 1 & \text{if } i \text{ is odd} \\
\log_2(i) & \text{if } i \text{ is even and a power of 2} \\
1 & \text{if } i \text{ is even and not a power of 2}
\end{cases}
\]
We simplify the original $\bm{H}^{\theta_1/\theta_2},\bm{G}^{\theta_1/\theta_2},\bm{H}^{\dag\theta_1/\theta_2},\bm{G}^{\dag\theta_1/\theta_2}$ with just $\bm{H}\in\mathbb{R}^{B\times L\times N}$, which is generated directly through a linear layer, and omit the reconstruction process. This approach directly sparsifies the kernels $e^{\bm{A}t}\bm{B}$ in different subspaces and using the linear layer to generate hierarchical space projection matrix. Afterwards, we learn the evolution using the data $x$ in the frequency domain.
\begin{equation}
    \bm{H}=\mathsf{Linear}_{\bm{H}}(u), \quad\quad \bm{W}_{out}=\mathsf{Linear}_{\bm{W}_{out}}(u).
    \label{hout}
\end{equation}
In this paper, we utilize this indirect and efficient hierarchical projection as the default setting. Ultimately, Attraos projects from the polynomial spectral space back to the phase space by employing another gating projection $\bm{W}_{out}\in\mathbb{R}^{B\times L\times N \times 1}$ \eqref{hout} to $x\in\mathbb{R}^{B\times L\times D \times N}$, and derives the prediction results using an observation function parameterized by $\bm{W}_h\in\mathbb{R}^{LD\times H}$ (flattening the patches).

\section{Experiments}
In this section, we commence by conducting a comprehensive performance comparison of Attraos against other state-of-the-art models in seven mainstream LTSF datasets along with two typical chaotic datasets, namely Lorenz96-3d and Air-Convection, followed by ablation experiments pertaining to model architectures. Furthermore, leveraging the properties of chaotic dynamical systems, we explore their extended applications, including experiments on chaotic evolution, representation, reconstruction, and modulation (Appendix \ref{Supplemental Experiments}). Finally, we provide the complexity analysis and robustness analysis. For detailed information regarding baseline models, dataset descriptions, experimental settings, and hyper-parameter analysis, please refer to Appendix \ref{exp details}.

\subsection{Overall Performance}

\begin{table*}[t]
\centering
\caption{Average results of long-term forecasting with an input length of 96 and prediction horizons of \{96, 192, 336, 720\}. The best performance is in {\color{red}Red}, and the second best is in {\color{blue}Blue}. Full results are in Appendix \ref{full results}.}
\setlength{\tabcolsep}{4pt}
\begin{adjustbox}{width=1\columnwidth, center}
\resizebox{1\columnwidth}{!}{
\begin{tabular}{c|cc|cc|cc|cc|cc|cc|cc|cc|cc}
\toprule[1.5pt]
\multicolumn{1}{c}{\multirow{2}{*}{\large{Model}}}&\multicolumn{2}{c}{\cellcolor{mycolor4}{Attraos}}&\multicolumn{2}{c}{\cellcolor{mycolor4}Mamba4TS}&\multicolumn{2}{c}{\cellcolor{mycolor4}S-Mamba}&\multicolumn{2}{c}{\cellcolor{mycolor4}RWKV-TS}&\multicolumn{2}{c}{\cellcolor{mycolor4}GPT-TS} &\multicolumn{2}{c}{\cellcolor{mycolor4}Koopa}&\multicolumn{2}{c}{\cellcolor{mycolor4}InvTrm}&\multicolumn{2}{c}{\cellcolor{mycolor4}PatchTST}&\multicolumn{2}{c}{\cellcolor{mycolor4}DLinear}\\
\multicolumn{1}{c}{} & \multicolumn{2}{c}{\cellcolor{mycolor4}{(Ours)}} & \multicolumn{2}{c}{\cellcolor{mycolor4}(Time Emb.)}&\multicolumn{2}{c}{\cellcolor{mycolor4}\cite{wang2024mamba})} & \multicolumn{2}{c}{\cellcolor{mycolor4}\cite{hou2024rwkv}} & \multicolumn{2}{c}{\cellcolor{mycolor4}\cite{zhou2023one}} & \multicolumn{2}{c}{\cellcolor{mycolor4}\cite{liu2023koopa}} & \multicolumn{2}{c}{\cellcolor{mycolor4}\cite{liu2023itransformer}} & \multicolumn{2}{c}{\cellcolor{mycolor4}\cite{nie2022time}} & \multicolumn{2}{c}{\cellcolor{mycolor4}\cite{zeng2023transformers}} \\
\cmidrule(l){1-1}\cmidrule(l){2-3}\cmidrule(l){4-5}\cmidrule(l){6-7}\cmidrule(l){8-9}\cmidrule(l){10-11}\cmidrule(l){12-13}\cmidrule(l){14-15}\cmidrule(l){16-17}
\multicolumn{1}{c}{Metric}&MSE & MAE & MSE &MAE & MSE &MAE & MSE & MAE & MSE & MAE & MSE & MAE & MSE & MAE & MSE & MAE & MSE & MAE\\
\midrule
\midrule
ETTh1
& \textbf{\color{red}0.423} & \textbf{\color{red}0.420} & 0.444 & 0.438 & 0.459 & 0.453 & 0.454 & 0.446 & 0.457 & 0.450 & 0.450 & 0.443 & 0.463 & 0.454 & \textbf{\color{blue}0.434} & \textbf{\color{blue}0.435} & 0.462 & 0.458 \\
\midrule
ETTh2
& \textbf{\color{red}0.372} & \textbf{\color{red}0.399} & 0.386 & 0.410 & 0.381 & 0.407 &\textbf{\color{blue}0.375} & \textbf{\color{blue}0.402} & 0.389 & 0.414 & 0.397 & 0.417 & 0.383 & 0.407 & 0.380 & 0.406 & 0.564 & 0.520 \\
\midrule
ETTm1
& \textbf{\color{red}0.382} & \textbf{\color{red}0.391} & 0.396 & 0.406 & 0.399 & 0.407 & \textbf{\color{blue}0.391} & 0.403 & 0.396 &  0.401 & 0.395 & 0.403 & 0.407 & 0.412 & 0.403 & \textbf{\color{blue}0.398} & 0.403 & 0.406 \\
\midrule
ETTm2
& \textbf{\color{red}0.280} & \textbf{\color{red}0.324} & 0.299 & 0.343 & 0.289 & 0.333 & 0.285 & 0.330 & 0.294 & 0.339 & \textbf{\color{blue}0.281} & \textbf{\color{blue}0.326} & 0.291 & 0.335 & 0.283 & 0.329 & 0.345 & 0.396 \\
\midrule
Exchange
& \textbf{\color{blue}0.349} & \textbf{\color{red}0.395} & 0.364 & \textbf{\color{blue}0.405} & 0.364 & 0.407 & 0.406 & 0.439 & 0.371 & 0.409 &  0.390 & 0.424 & 0.366 & 0.416 & 0.383 & 0.416 & \textbf{\color{red}0.346} & 0.416 \\
\midrule
Crypto
& \textbf{\color{red}0.187} & \textbf{\color{blue}0.157} & 0.193 & 0.162 & 0.198 & 0.163 & \textbf{\color{blue}0.190} & \textbf{\color{red}0.159} & 0.196 & 0.164 & 0.199 & 0.165 & 0.196 & 0.164 & 0.192 & 0.161 & 0.201 & 0.176\\
\midrule
Weather
& \textbf{\color{red}0.246} &\textbf{\color{red}0.271} & 0.258 & 0.280 & 0.252 & 0.277 &0.256 & 0.280 & 0.279 & 0.279 & \textbf{\color{blue}0.247} & \textbf{\color{blue}0.273} & 0.260 & 0.280 & 0.258 & 0.280 & 0.267 & 0.319 \\
\bottomrule[1.5pt]
\end{tabular}}
\label{tab:aver96}
\end{adjustbox}
\end{table*}

\noindent \textbf{Mainstream LTSF Datasets}. As depicted in Table \ref{tab:aver96}, we can observe that: (a) Attraos consistently exhibits the best performance, closely followed by RWKV-TS and Koopa, which underscores the crucial role of modeling temporal dynamics in LTSF tasks. (b) The models based on state space models (Mamba4TS, S-Mamba) generally outperform the Transformer-based models (PatchTST, InvTrm), indicating the potential superiority of state space models as fundamental frameworks for temporal modeling. (c) The performance of the GPT-TS model, which relies on a pre-trained large language model, is relatively average, suggesting the inherent challenge in directly capturing the dynamics of temporal data using such models. A promising avenue for future research lies in training a dynamical foundational model from scratch on large-scale physical datasets or leveraging pre-training to obtain an attractor tokenizer that is better suited for inputs to the large language model.

\begin{table*}[t!]
\centering
\caption{Prediction results on the artificial (Lorenz96-3d) and real-world chaotic datasets (Air-convection) with various forecasting lengths. {\color{red}Red}/{\color{blue}Blue} denotes the best/second performance.}
\setlength{\tabcolsep}{6pt}
\begin{adjustbox}{width=1\columnwidth, center}
\label{tab:chaos}
\resizebox{1\columnwidth}{!}{
\begin{tabular}{p{0.3cm}|c|cc|cc|cc|cc|cc|cc|cc|cc}
\toprule[1.5pt]
\multicolumn{2}{c}{\multirow{2}{*}{\large{Model}}}&\multicolumn{2}{c}{\cellcolor{mycolor4}{Attraos}}&\multicolumn{2}{c}{\cellcolor{mycolor4}Mamba4TS}&\multicolumn{2}{c}{\cellcolor{mycolor4}S-Mamba}&\multicolumn{2}{c}{\cellcolor{mycolor4}RWKV-TS} &\multicolumn{2}{c}{\cellcolor{mycolor4}Koopa}&\multicolumn{2}{c}{\cellcolor{mycolor4}InvTrm}&\multicolumn{2}{c}{\cellcolor{mycolor4}PatchTST}&\multicolumn{2}{c}{\cellcolor{mycolor4}DLinear}\\
\multicolumn{2}{c}{} & \multicolumn{2}{c}{\cellcolor{mycolor4}{(Ours)}} & \multicolumn{2}{c}{\cellcolor{mycolor4}(Time Emb.)}&\multicolumn{2}{c}{\cellcolor{mycolor4}\cite{wang2024mamba})} & \multicolumn{2}{c}{\cellcolor{mycolor4}\cite{hou2024rwkv}} & \multicolumn{2}{c}{\cellcolor{mycolor4}\cite{liu2023koopa}} & \multicolumn{2}{c}{\cellcolor{mycolor4}\cite{liu2023itransformer}} & \multicolumn{2}{c}{\cellcolor{mycolor4}\cite{nie2022time}} & \multicolumn{2}{c}{\cellcolor{mycolor4}\cite{zeng2023transformers}} \\
\cmidrule(l){1-2}\cmidrule(l){3-4}\cmidrule(l){5-6}\cmidrule(l){7-8}\cmidrule(l){9-10}\cmidrule(l){11-12}\cmidrule(l){13-14}\cmidrule(l){15-16}\cmidrule(l){17-18}
\multicolumn{2}{c}{Metric}&MSE & MAE & MSE &MAE & MSE &MAE & MSE & MAE & MSE & MAE & MSE & MAE & MSE & MAE & MSE & MAE \\
\midrule
\midrule
\multirow{5}{*}{\begin{sideways}Lorenz96-3d\end{sideways}} 
& 96  & \textbf{\color{red}0.844}&\textbf{\color{red}0.684} & 0.892& \textbf{\color{blue}0.721}& 0.925& 0.744& 0.894& 0.722& 0.891 & 0.736& 0.963 & 0.786 & 0.929 & 0.756 & \textbf{\color{blue}0.881} & 0.750 \\
& 192 & \textbf{\color{red}0.835}&\textbf{\color{red}0.662} &0.910 &0.748 &0.917 & 0.761& 0.894& 0.744& \textbf{\color{blue}0.881} & 0.752& 0.944 & 0.811 & 0.899 & \textbf{\color{blue}0.714} & 0.910 & 0.753 \\
& 336 & \textbf{\color{red}0.837}&\textbf{\color{red}0.681} &0.943 & 0.772 & 0.968 & 0.788 & 0.982& 0.823 & 0.914 & 0.753 & 0.997 & 0.841 & 0.922 & 0.787 & \textbf{\color{blue}0.893} & \textbf{\color{blue}0.737} \\
& 720 & \textbf{\color{red}0.872}&\textbf{\color{red}0.739} & 0.996 & 0.814 & 1.135 & 0.940 & 1.058 & 0.921& 0.989 & 0.801& 1.129 & 0.955 & 0.971 & 0.828 & \textbf{\color{blue}0.927} & \textbf{\color{blue}0.806} \\
& AVG & \textbf{\color{red} 0.847}&\textbf{\color{red}0.692} & 0.935 & 0.764 & 0.986 & 0.808 & 0.957 & 0.803 & 0.919 & 0.761 & 1.008 & 0.848 & 0.930 & 0.771 & 0.903 & 0.762 \\
\midrule
\multirow{5}{*}{\begin{sideways}Air\end{sideways}} 
& 96  & \textbf{\color{red}0.437} & \textbf{\color{red}0.303} & 0.451 & 0.314 & 0.468 & 0.329 & 0.447 & 0.308 & 0.443 & \textbf{\color{blue}0.307} & 0.470 & 0.337 & 0.465 & 0.331 & \textbf{\color{blue}0.441} & 0.325 \\
& 192 & \textbf{\color{blue}0.455} & \textbf{\color{red}0.321} & 0.472 & 0.331 & 0.481 & 0.340 & 0.467 & 0.328 & \textbf{\color{red}0.451} & \textbf{\color{blue}0.329} & 0.485 & 0.349 & 0.477 & 0.341 & 0.460 & 0.338 \\
& 336 & \textbf{\color{red}0.456} & \textbf{\color{red}0.334} & 0.468 & 0.342 & 0.485 & 0.351 & \textbf{\color{blue}0.461} & \textbf{\color{blue}0.339} & 0.468 & 0.342 & 0.499 & 0.363 & 0.484 & 0.353 & \textbf{\color{blue}0.461} & 0.341 \\
& 720 & \textbf{\color{red}0.466} & \textbf{\color{red}0.355} & 0.492 & 0.379 & 0.501 & 0.386 & 0.482 & 0.367 & 0.488 & 0.369 & 0.516 & 0.401 & 0.504 & 0.392 & \textbf{\color{blue}0.474} & \textbf{\color{blue}0.359}\\
& AVG & \textbf{\color{red}0.454} & \textbf{\color{red}0.328} & 0.471 & 0.342 & 0.484 & 0.352 & 0.464 & \textbf{\color{blue}0.336} & 0.463 & 0.337 & 0.493 & 0.363 & 0.483 & 0.354 &\textbf{\color{blue}0.459} & 0.341 \\
\bottomrule[1.5pt]
\end{tabular}}
\end{adjustbox}
\end{table*}

\begin{table*}[t]
\centering
\caption{Results of ablation study. ``w/o" denotes without. PSR: Phase Space Reconstruction; MS: Multi-scale hierarchical projection; TV: Time-varying $\bm{B}$ and $\bm{W}_{out}$; SPA: Specially initialized $\bm{A}$ ; FE: Frequency Evolution. {\color{red}Red}/{\color{blue}Blue} denotes the performance improvement/decline.}
\label{tab:ablation}
\setlength{\tabcolsep}{10pt}
\begin{adjustbox}{width=1\columnwidth, center}
\resizebox{1\columnwidth}{!}{
\begin{tabular}{p{0.3cm}|c|cc|cc|cc|cc|cc|cc}
\toprule[1.5pt]
\multicolumn{2}{c}{\multirow{1}{*}{{Model}}}&\multicolumn{2}{c}{\cellcolor{mycolor4}{Attraos}}&\multicolumn{2}{c}{\cellcolor{mycolor4}w/o PSR}&\multicolumn{2}{c}{\cellcolor{mycolor4}w/o MS} &\multicolumn{2}{c}{\cellcolor{mycolor4}w/o TV} &\multicolumn{2}{c}{\cellcolor{mycolor4}w/o SPA}&\multicolumn{2}{c}{\cellcolor{mycolor4}w/o FE}\\
\cmidrule(l){1-2}\cmidrule(l){3-4}\cmidrule(l){5-6}\cmidrule(l){7-8}\cmidrule(l){9-10}\cmidrule(l){11-12}\cmidrule(l){13-14}
\multicolumn{2}{c}{Metric}&MSE & MAE & MSE &MAE & MSE &MAE & MSE & MAE & MSE & MAE & MSE & MAE \\
\midrule
\midrule
\multirow{5}{*}{\begin{sideways}ETTh2\end{sideways}} 
& 96  & 0.292 & 0.348 & \textbf{\color{blue}0.301} & \textbf{\color{blue}0.357} & \textbf{\color{blue}0.299} & \textbf{\color{blue}0.353} & \textbf{\color{blue}0.299} & \textbf{\color{blue}0.354} & \textbf{\color{blue}0.294} & \textbf{\color{red}0.351} & \textbf{\color{blue}0.297} & \textbf{\color{blue}0.352}\\
& 192 & 0.374 & 0.386 & \textbf{\color{blue}0.389} & \textbf{\color{blue}0.405} & \textbf{\color{blue}0.384} & \textbf{\color{blue}0.393} & \textbf{\color{blue}0.381} & \textbf{\color{blue}0.395} & \textbf{\color{red}0.373} & \textbf{\color{red}0.384} & \textbf{\color{blue}0.378} & \textbf{\color{blue}0.388}\\
& 336 & 0.420 & 0.432 & \textbf{\color{blue}0.427} & \textbf{\color{blue}0.438} & \textbf{\color{blue}0.426} & \textbf{\color{blue}0.436} & \textbf{\color{blue}0.424} & \textbf{\color{red}0.430} & \textbf{\color{blue}0.425} & \textbf{\color{blue}0.436} & \textbf{\color{blue}0.427} & \textbf{\color{blue}0.435}\\
& 720 & 0.418 & 0.431 & \textbf{\color{blue}0.431} & \textbf{\color{blue}0.450} & \textbf{\color{blue}0.425} & \textbf{\color{blue}0.437} & \textbf{\color{red}0.416} & \textbf{\color{red}0.427} & \textbf{\color{blue}0.421} & \textbf{\color{blue}0.433} & \textbf{\color{blue}0.427} & \textbf{\color{blue}0.437}\\
& AVG & 0.376 & 0.399 & \textbf{\color{blue}0.387} & \textbf{\color{blue}0.413} & \textbf{\color{blue}0.380} & \textbf{\color{blue}0.405} & \textbf{\color{blue}0.380} & \textbf{\color{blue}0.402} & \textbf{\color{blue}0.478} & \textbf{\color{blue}0.401} & \textbf{\color{blue}0.382} & \textbf{\color{blue}0.403}\\
\midrule
\multirow{5}{*}{\begin{sideways}Weather\end{sideways}} 
& 96  & 0.159 & 0.206 & \textbf{\color{blue}0.171} & 0.215 & \textbf{\color{blue}0.163} & \textbf{\color{blue}0.210} & \textbf{\color{blue}0.167} & \textbf{\color{blue}0.214} & \textbf{\color{blue}0.162} & \textbf{\color{blue}0.209} & \textbf{\color{blue}0.164} & \textbf{\color{blue}0.211} \\
& 192 & 0.212 & 0.249 & \textbf{\color{blue}0.266} & \textbf{\color{blue}0.263} & \textbf{\color{blue}0.218} & \textbf{\color{blue}0.253} & \textbf{\color{blue}0.222} & \textbf{\color{blue}0.270} & \textbf{\color{blue}0.215} & 0.249 & \textbf{\color{blue}0.216} & \textbf{\color{blue}0.253} \\
& 336 & 0.265 & 0.288 & \textbf{\color{blue}0.282} & \textbf{\color{blue}0.304} & \textbf{\color{blue}0.271} & \textbf{\color{blue}0.295} & \textbf{\color{blue}0.277} & \textbf{\color{blue}0.297} & \textbf{\color{red}0.263} & 0.288 & \textbf{\color{blue}0.270} & \textbf{\color{blue}0.294} \\
& 720 & 0.347 & 0.340 & \textbf{\color{blue}0.358} & 0.351 & \textbf{\color{red}0.346} & \textbf{\color{red}0.338} & \textbf{\color{blue}0.355} & \textbf{\color{blue}0.346} & 0.347 & \textbf{\color{blue}0.342} & \textbf{\color{blue}0.355} & \textbf{\color{blue}0.356} \\
& AVG & 0.246 & 0.271 & \textbf{\color{blue}0.259} & \textbf{\color{blue}0.283} & \textbf{\color{blue}0.250} & \textbf{\color{blue}0.274} & \textbf{\color{blue}0.255} & \textbf{\color{blue}0.282} & \textbf{\color{blue}0.247} & \textbf{\color{blue}0.272} & \textbf{\color{blue}0.251} & \textbf{\color{blue}0.279}\\
\bottomrule[1.5pt]
\end{tabular}}
\end{adjustbox}
\end{table*}

\noindent \textbf{Chaotic Datasets}. Table \ref{tab:chaos} presents the results on both artificial and real-world chaotic datasets. It can be observed that: (a) Attraos exhibits superior performance on both datasets, thanks to the utilization of the PSR and MDMU modules, which effectively capture the multi-scale attractor structures.
(b) In Lorenz96 dataset, where there is a prior knowledge about the phase space dimension, Attraos outperforms other models by a significant margin. This highlights the importance of PSR in recovering the complete temporal dynamics.
(c) Apart from Attraos, the linear model (DLinear) demonstrate the best predictive results due to their robustness. Conversely, deep learning models based on transformers exhibit weaker performance and fail to model chaotic dynamics effectively.
\subsection{Further Analysis} \label{further exp}
\noindent \textbf{Ablation Studies.}~
Next, we turn off each module of Attraos to assess their individual effects. As shown in Table \ref{tab:ablation}, we observe consistent performance decline of Attraos when deleting each module: (a) The removal of Phase Space Reconstruction exhibits the most severe performance degradation, indicating that the process of reconstructing the dynamical structure through PSR forms the foundation for Attraos' efficient capture of attractor structures.
(b) Multi-scale hierarchical projection effectively captures the complex topological structure of the singular attractor, leading to improved performance. However, overfitting may occur in certain prediction lengths.
(c) Time-varying $\bm{B}$ and $\bm{W}_{out}$ acts as a gating attention mechanism, allowing for a more focused emphasis on dynamical structure segments that potentially contain attractor structures, thereby enhancing performance.
(d) The initialization method proposed for $\bm{A}$ matrix demonstrates marginal yet consistent improvements, underscoring the importance of prior inductive bias for machine learning models.
(e) Frequency domain evolution methods significantly reduce temporal noise information and amplify attractor structures. We will further analyze the importance of frequency domain evolution in subsequent analysis.

\noindent \textbf{Chaotic Evolution Strategy.}~
\label{strategy}
We further compare various dynamical system evolution strategies mentioned in Section \ref{sec3.3}. From Table \ref{tab: strategy}, it is evident that the frequency-enhanced evolution strategy outperforms the others. Moreover, our proposed efficient implicit evolution method can adaptively explore multi-scale dynamical structure information, avoiding redundant cyclic computations and mitigating overfitting.
(b) An inherent characteristic of time series data is significant noise, making it challenging to capture the underlying dynamical structures in the time domain. Direct evolution strategies, whether linear or non-linear neural network-based, do not yield satisfactory results. Moreover, according to the theorem 2 in FiLM \cite{zhou2022film}, the recursion computation for dynamic projection further accumulates noise information.
(c) Applying Hopfield networks in the time domain also proves to be unsatisfactory, and even adding more patterns (\emph{Key} and \emph{Value}) can have adverse effects. A potential solution is to apply Hopfield networks in the frequency domain instead.

\begin{table*}[t!]
\centering
\caption{Results of various chaotic evolution strategies. {\color{red}Red}/{\color{blue}Blue} denotes the best/second performance.}
\label{tab: strategy}
\setlength{\tabcolsep}{10pt}
\begin{adjustbox}{width=1\columnwidth, center}
\resizebox{1\columnwidth}{!}{
\begin{tabular}{p{0.3cm}|c|cc|cc|cc|cc|cc|cc}
\toprule[1.5pt]
\multicolumn{2}{c}{\multirow{1}{*}{{SSMs}}}&\multicolumn{2}{c}{\cellcolor{mycolor4}{Implicit Fre}}&\multicolumn{2}{c}{\cellcolor{mycolor4}Explicit Fre}&\multicolumn{2}{c}{\cellcolor{mycolor4}Direct-Linear} &\multicolumn{2}{c}{\cellcolor{mycolor4}Direct-CNN} &\multicolumn{2}{c}{\cellcolor{mycolor4}Hopfield-16modes}&\multicolumn{2}{c}{\cellcolor{mycolor4}Hopfield-64modes}\\
\cmidrule(l){1-2}\cmidrule(l){3-4}\cmidrule(l){5-6}\cmidrule(l){7-8}\cmidrule(l){9-10}\cmidrule(l){11-12}\cmidrule(l){13-14}
\multicolumn{2}{c}{Metric}&MSE & MAE & MSE &MAE & MSE &MAE & MSE & MAE & MSE & MAE & MSE & MAE \\
\midrule
\midrule
\multirow{5}{*}{\begin{sideways}ETTh1\end{sideways}} 
& 96  & \textbf{\color{red}0.370} & \textbf{\color{red}0.388} & \textbf{\color{blue}0.376} & \textbf{\color{blue}0.392} & 0.388 & 0.404 & 0.395 & 0.410 & 0.384 & 0.405 & 0.389 & 0.407 \\
& 192 & \textbf{\color{red}0.416} & \textbf{\color{red}0.418} & \textbf{\color{blue}0.419} & \textbf{\color{blue}0.423} & 0.441 & 0.439 & 0.444 & 0.437 & 0.430 & 0.446 & 0.427 & 0.442\\
& 336 & \textbf{\color{red}0.458} & \textbf{\color{red}0.432} & \textbf{\color{blue}0.465} & \textbf{\color{blue}0.439} & 0.488 & 0.460 & 0.482 & 0.456 & 0.480 & 0.482 & 0.485 & 0.489\\
& 720 & \textbf{\color{red}0.447} & \textbf{\color{red}0.442} & \textbf{\color{blue}0.454} & \textbf{\color{blue}0.448} & 0.511 & 0.508 & 0.510 & 0.512 & 0.494 & 0.491 & 0.502 & 0.500\\
& AVG & \textbf{\color{red}0.423} & \textbf{\color{red}0.420} & \textbf{\color{blue}0.429} & \textbf{\color{blue}0.426} & 0.457 & 0.453 & 0.458 & 0.454 & 0.447 & 0.456 & 0.426 &0.460 \\
\midrule
\multirow{5}{*}{\begin{sideways}ETTm2\end{sideways}} 
& 96  & \textbf{\color{red}0.172} & \textbf{\color{red}0.254} & \textbf{\color{blue}0.175} & \textbf{\color{blue}0.258} & 0.187 & 0.266 & 0.191 & 0.269 & 0.181 & 0.260 & 0.182 & 0.263\\
& 192 & \textbf{\color{red}0.242} & \textbf{\color{red}0.301} & \textbf{\color{blue}0.247} & \textbf{\color{blue}0.308} & 0.264 & 0.331 & 0.265 & 0.334 & 0.255 & 0.312 & 0.259 & 0.314\\
& 336 & \textbf{\color{red}0.303} & \textbf{\color{red}0.340} & \textbf{\color{blue}0.310} & 0.349 & 0.325 & 0.359 & 0.319 & 0.354 & 0.315 & \textbf{\color{blue}0.347} & 0.312 & 0.344\\
& 720 & \textbf{\color{red}0.401} & \textbf{\color{red}0.399} & \textbf{\color{blue}0.407} & \textbf{\color{blue}0.405} & 0.424 & 0.419 & 0.421 & 0.422 & 0.420 & 0.426 & 0.417 & 0.422\\
& AVG & \textbf{\color{red}0.280} & \textbf{\color{red}0.324} & \textbf{\color{blue}0.285} & \textbf{\color{blue}0.330} & 0.300 & 0.344 & 0.297 & 0.345 & 0.293 & 0.336 & 0.293 & 0.336\\
\bottomrule[1.5pt]
\end{tabular}}
\end{adjustbox}
\vspace{-1em}
\end{table*}

\noindent \textbf{Complexity Analysis.}~
As depicted in Figure \ref{Fig: adaptive}, we present a comprehensive visual analysis comparing Attraos with various baseline models in terms of their average performance on the ETTh1 dataset. The x-axis represents the training time, the y-axis represents the test loss, and the circle radius corresponds to the model parameters. In this analysis, we substituted GPT-TS with FiLM due to its limited relevance to this specific evaluation.
The results clearly demonstrate that Attraos surpasses other models in both time and space complexity, maintaining a significant advantage. Notably, when compared to the PatchTST model with a hidden dimension of 256 (2.4M parameters), Attraos (0.2M parameters) possesses only one-twelfth of its parameter count.

\noindent \textbf{Robustness Analysis.}~
We add a 0.1 * $\mathcal{N}$(0, 1) Gaussian noise to the training dataset to test the robustness of Attraos. As shown in Table \ref{Tab: noise}, it can be observed that Attraos exhibits strong robustness against noisy data, and increasing the level of noise can even lead to further performance improvement. This is attributed to the frequency domain evolution strategy, where we retain only the dominant modes as attractor structures, effectively removing the noise information. Furthermore, an interesting phenomenon has been observed in our experiments: as noise is introduced, the model's convergence speed increases. This discovery warrants further exploration in future studies.

\begin{figure}[t]
\begin{minipage}{0.45\linewidth}
        \centering
		\includegraphics[width=0.99\columnwidth]{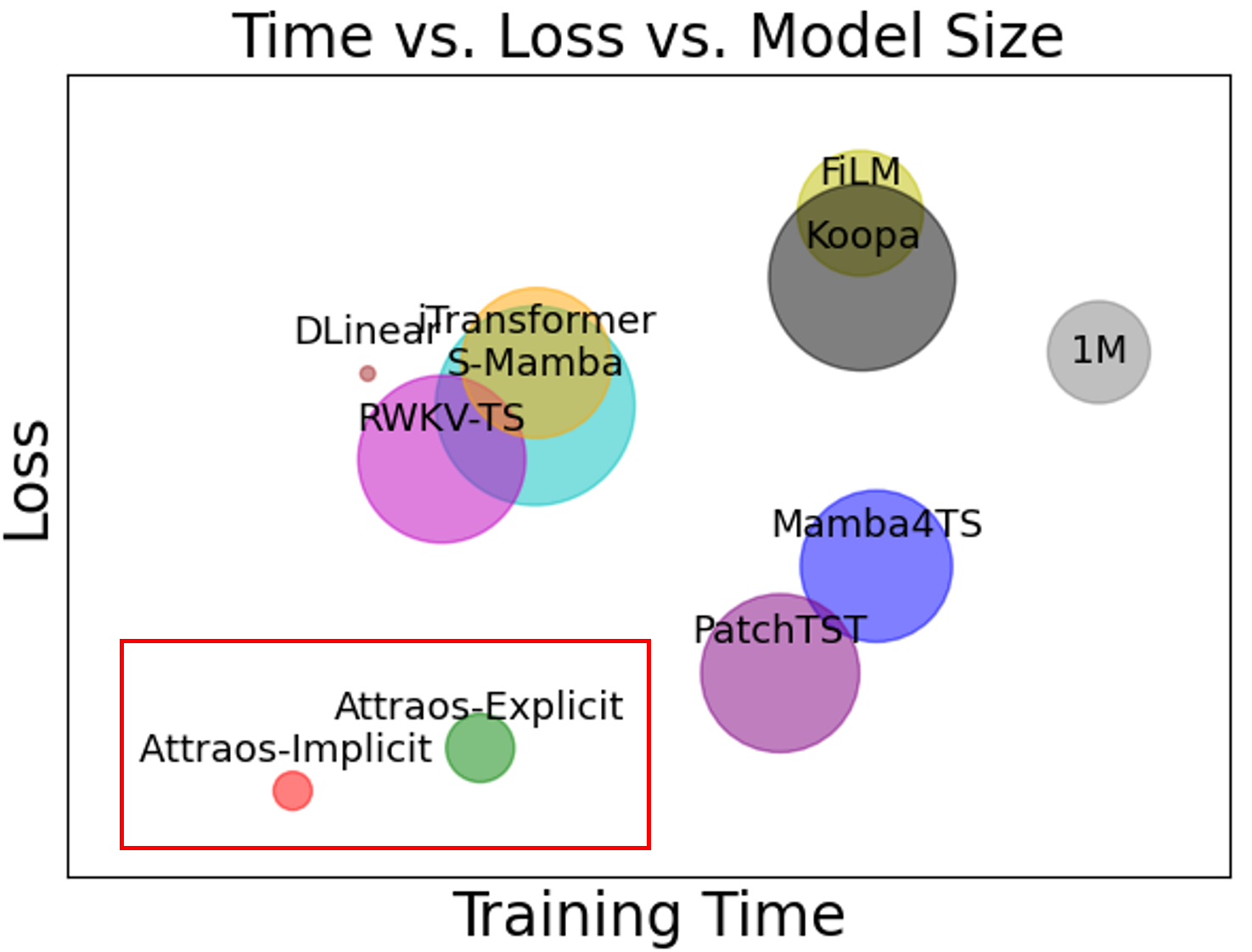}
		\caption{\centering Complexity analysis.}
		\label{Fig: adaptive}
\end{minipage}
\begin{minipage}{0.55\linewidth}
\centering
\setlength{\tabcolsep}{14pt}
\resizebox{1\columnwidth}{!}{
\begin{tabular}{p{0.3cm}|c|cc|cc}
\toprule[1.5pt]
\multicolumn{2}{c}{Model}&\multicolumn{2}{c}{\textbf{Attraos}}&\multicolumn{2}{c}{Attrao-noise}\\
\cmidrule(l){1-2}\cmidrule(l){3-4}\cmidrule(l){5-6}
\multicolumn{2}{c}{Metric}& MSE & MAE & MSE & MAE \\
\midrule
\midrule
\multirow{5}{*}{\begin{sideways}ETTh1\end{sideways}} 
& 96  & 0.370 & 0.388 & \textbf{\color{red}0.360} & \textbf{\color{blue}0.390} \\
& 192 & 0.416 & 0.418 & \textbf{\color{red}0.413} & \textbf{\color{red}0.415} \\
& 336 & 0.458 & 0.432 & \textbf{\color{red}0.455} & \textbf{\color{red}0.430} \\
& 720 & 0.447 & 0.442 & \textbf{\color{blue}0.451} &\textbf{\color{blue}0.444} \\
& AVG & 0.423 & 0.420 & \textbf{\color{red}0.422} & 0.420 \\
\midrule
\multirow{5}{*}{\begin{sideways}ETTm2\end{sideways}} 
& 96  & 0.172 & 0.254 & \textbf{\color{red}0.170} & \textbf{\color{red}0.251} \\
& 192 & 0.242 & 0.301 & \textbf{\color{red}0.238} & \textbf{\color{red}0.297} \\
& 336 & 0.303 & 0.340 & \textbf{\color{blue}0.305} & \textbf{\color{red}0.339} \\
& 720 & 0.401 & 0.399 & \textbf{\color{red}0.398} & \textbf{\color{red}0.392} \\
& AVG & 0.280 & 0.324 & \textbf{\color{red}0.278} & \textbf{\color{red}0.320} \\
\bottomrule[1.5pt]
\end{tabular}}
\captionof{table}{Robustness analysis with additional noise. {\color{red}Red}/{\color{blue}Blue} denotes the performance improvement/decline.}
\label{Tab: noise}
\end{minipage} 
\vspace{-1em}
\end{figure}

\noindent \textbf{Chaotic Reconstruction.}~
As illustrated in the left of Figure \ref{fig:add}, we visualize the phase space forecasting results of Attraos on the Lorenz96 system. It can be observed that: Although some minor details may be missing due to the sparsity introduced by the frequency domain evolution, Attraos successfully reconstructs the chaotic dynamic structure of Lorenz96. Moreover, modeling time series based on the dynamics structure of the phase space can be viewed as a form of data augmentation, e.g., two-dimensional time figuring \cite{wu2022timesnet} or seasonal decomposition \cite{zhou2022fedformer}.

\noindent \textbf{Chaotic Representation \& Hyper-parameter Analysis.}~
In the right of Figure \ref{fig:add}, we validated the impact of polynomial dimensions on the model performance. Noteworthy observations include: As noted in substantial literature on SSMs, a polynomial dimension of 256 is generally required to approximate input time series signals with sufficient accuracy. However, we found that the patch operation effectively reduces this threshold in a linear fashion. For instance, in the ETT dataset with a phase space dimension of 4, the required polynomial dimension drops to 256/4 = 64 dimensions.
\begin{figure}[h]
    \centering
    \includegraphics[width=1\linewidth]{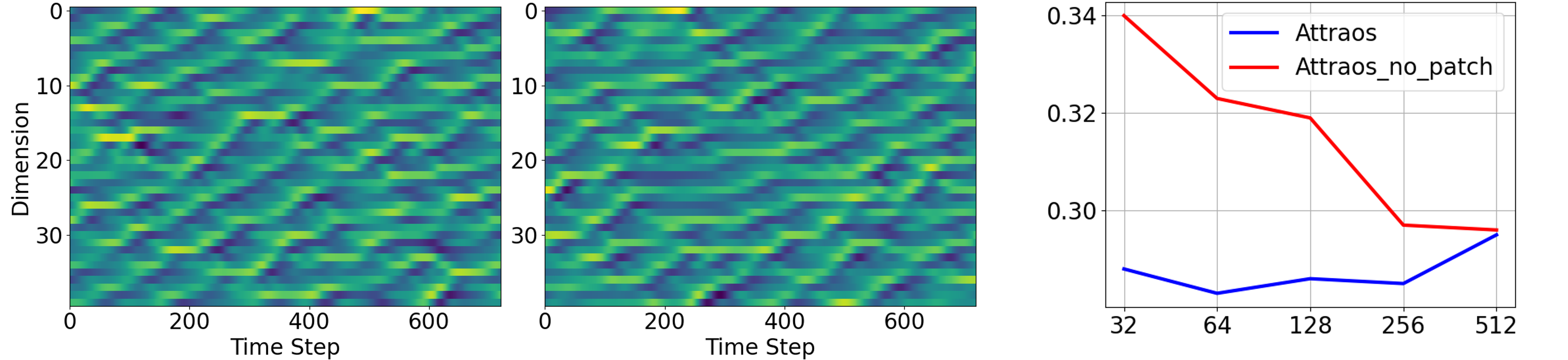}
    \caption{Left: Chaotic Reconstruction for Lorenz96 system with 720 forecasting step. Right: Hyper-parameter analysis w.r.t. polynomial orders for different model variants w.r.t. patching operation in ETTm2 dataset.}
    \label{fig:add}
\end{figure}

\noindent \textbf{Discussion.}~
    Recently, across various fields of machine learning, an increasing number of works have focused on the underlying physical properties hidden within real-world observational data \cite{hu2024toward,jiao2024ai,levine2024machine,yu2024learning}. By leveraging physical priors, these models achieve significant improvements in generalization, accuracy, and interpretability. We hope Attraos introduces a fresh perspective to the LTSF community and encourages the emergence of physics-guided time series analysis models.

\section{Conclusion and Future Work}\label{limitation}
LTSF tasks have long been a focal point of research in the machine-learning community. However, mainstream deep learning models currently overlook the crucial aspect that time series data is derived from discretely sampling underlying continuous dynamical systems. Inspired by chaotic theory, our model, Attraos, considers time series as generated by a generalized chaotic dynamical system. By leveraging the invariance of attractors, Attraos enhances predictive performance and provides empirical and theoretical explanations. In the future, we will verify our proposed Attraos on large-scale chaotic datasets and utilize the implicit neural representations for the phase space coordinates, enabling a trainable embedding process to address the instability of PSR techniques.



\clearpage
\begin{ack}
This work is mainly supported by the National Natural Science Foundation of China (No. 62402414). This work is also supported by the Guangzhou-HKUST(GZ) Joint Funding Program (No. 2024A03J0620), Guangzhou Municipal Science and Technology Project (No. 2023A03J0011),  the Guangzhou Industrial Information and Intelligent Key Laboratory Project (No. 2024A03J0628), and a grant from State Key Laboratory of Resources and Environmental Information System, and Guangdong Provincial Key Lab of Integrated Communication, Sensing and Computation for Ubiquitous Internet of Things (No. 2023B1212010007).
\end{ack}

\bibliography{main}

\begin{thebibliography}{55}
\providecommand{\natexlab}[1]{#1}
\providecommand{\url}[1]{\texttt{#1}}
\expandafter\ifx\csname urlstyle\endcsname\relax
  \providecommand{\doi}[1]{doi: #1}\else
  \providecommand{\doi}{doi: \begingroup \urlstyle{rm}\Url}\fi

\bibitem[Alpert(1993)]{alpert1993class}
Bradley~K Alpert.
\newblock A class of bases in l\^{}2 for the sparse representation of integral operators.
\newblock \emph{SIAM journal on Mathematical Analysis}, 24\penalty0 (1):\penalty0 246--262, 1993.

\bibitem[An et~al.(2011)An, Jiang, Liu, and Zhao]{an2011wind}
Xueli An, Dongxiang Jiang, Chao Liu, and Minghao Zhao.
\newblock Wind farm power prediction based on wavelet decomposition and chaotic time series.
\newblock \emph{Expert Systems with Applications}, 38\penalty0 (9):\penalty0 11280--11285, 2011.

\bibitem[Blelloch(1990)]{blelloch1990prefix}
Guy~E Blelloch.
\newblock Prefix sums and their applications.
\newblock 1990.

\bibitem[Bollt(2021)]{bollt2021explaining}
Erik Bollt.
\newblock On explaining the surprising success of reservoir computing forecaster of chaos? the universal machine learning dynamical system with contrast to var and dmd.
\newblock \emph{Chaos: An Interdisciplinary Journal of Nonlinear Science}, 31\penalty0 (1), 2021.

\bibitem[Chen et~al.(2014)Chen, Fang, and Zheng]{chen2014phase}
Minyou Chen, Yonghui Fang, and Xufei Zheng.
\newblock Phase space reconstruction for improving the classification of single trial eeg.
\newblock \emph{Biomedical Signal Processing and Control}, 11:\penalty0 10--16, 2014.

\bibitem[Chen et~al.(2021)Chen, Chiang, Chiu, Huang, Xiao, Chang, and Huang]{chen2021high}
Yen-Lin Chen, Yuan Chiang, Pei-Hsin Chiu, I-Chen Huang, Yu-Bai Xiao, Shu-Wei Chang, and Chang-Wei Huang.
\newblock High-dimensional phase space reconstruction with a convolutional neural network for structural health monitoring.
\newblock \emph{Sensors}, 21\penalty0 (10):\penalty0 3514, 2021.

\bibitem[Demircigil et~al.(2017)Demircigil, Heusel, L{\"o}we, Upgang, and Vermet]{demircigil2017model}
Mete Demircigil, Judith Heusel, Matthias L{\"o}we, Sven Upgang, and Franck Vermet.
\newblock On a model of associative memory with huge storage capacity.
\newblock \emph{Journal of Statistical Physics}, 168:\penalty0 288--299, 2017.

\bibitem[Devaney(2018)]{devaney2018introduction}
Robert Devaney.
\newblock \emph{An introduction to chaotic dynamical systems}.
\newblock CRC press, 2018.

\bibitem[Deyle and Sugihara(2011)]{deyle2011generalized}
Ethan~R Deyle and George Sugihara.
\newblock Generalized theorems for nonlinear state space reconstruction.
\newblock \emph{Plos one}, 6\penalty0 (3):\penalty0 e18295, 2011.

\bibitem[Gu and Dao(2023)]{gu2023mamba}
Albert Gu and Tri Dao.
\newblock Mamba: Linear-time sequence modeling with selective state spaces.
\newblock \emph{arXiv preprint arXiv:2312.00752}, 2023.

\bibitem[Gu et~al.(2020)Gu, Dao, Ermon, Rudra, and R{\'e}]{gu2020hippo}
Albert Gu, Tri Dao, Stefano Ermon, Atri Rudra, and Christopher R{\'e}.
\newblock Hippo: Recurrent memory with optimal polynomial projections.
\newblock \emph{Advances in neural information processing systems}, 33:\penalty0 1474--1487, 2020.

\bibitem[Gu et~al.(2021)Gu, Goel, and R{\'e}]{gu2021efficiently}
Albert Gu, Karan Goel, and Christopher R{\'e}.
\newblock Efficiently modeling long sequences with structured state spaces.
\newblock \emph{arXiv preprint arXiv:2111.00396}, 2021.

\bibitem[Gupta et~al.(2021)Gupta, Xiao, and Bogdan]{gupta2021multiwavelet}
Gaurav Gupta, Xiongye Xiao, and Paul Bogdan.
\newblock Multiwavelet-based operator learning for differential equations.
\newblock \emph{Advances in neural information processing systems}, 34:\penalty0 24048--24062, 2021.

\bibitem[Gupta et~al.(2020)Gupta, Mittal, and Mittal]{gupta2020r}
Varun Gupta, Monika Mittal, and Vikas Mittal.
\newblock R-peak detection based chaos analysis of ecg signal.
\newblock \emph{Analog Integrated Circuits and Signal Processing}, 102:\penalty0 479--490, 2020.

\bibitem[Hess et~al.(2023)Hess, Monfared, Brenner, and Durstewitz]{hess2023generalized}
Florian Hess, Zahra Monfared, Manuel Brenner, and Daniel Durstewitz.
\newblock Generalized teacher forcing for learning chaotic dynamics.
\newblock \emph{arXiv preprint arXiv:2306.04406}, 2023.

\bibitem[Hopfield(2007)]{hopfield2007hopfield}
John~J Hopfield.
\newblock Hopfield network.
\newblock \emph{Scholarpedia}, 2\penalty0 (5):\penalty0 1977, 2007.

\bibitem[Hou and Yu(2024)]{hou2024rwkv}
Haowen Hou and F~Richard Yu.
\newblock Rwkv-ts: Beyond traditional recurrent neural network for time series tasks.
\newblock \emph{arXiv preprint arXiv:2401.09093}, 2024.

\bibitem[Hu et~al.(2024{\natexlab{a}})Hu, Lan, Zhou, Wen, and Liang]{hu2024time}
Jiaxi Hu, Disen Lan, Ziyu Zhou, Qingsong Wen, and Yuxuan Liang.
\newblock Time-ssm: Simplifying and unifying state space models for time series forecasting.
\newblock \emph{arXiv preprint arXiv:2405.16312}, 2024{\natexlab{a}}.

\bibitem[Hu et~al.(2024{\natexlab{b}})Hu, Wen, Ruan, Liu, and Liang]{hu2024twins}
Jiaxi Hu, Qingsong Wen, Sijie Ruan, Li~Liu, and Yuxuan Liang.
\newblock Twins: Revisiting non-stationarity in multivariate time series forecasting.
\newblock \emph{arXiv preprint arXiv:2406.03710}, 2024{\natexlab{b}}.

\bibitem[Hu et~al.(2024{\natexlab{c}})Hu, Zhang, Wen, Tsung, and Liang]{hu2024toward}
Jiaxi Hu, Bowen Zhang, Qingsong Wen, Fugee Tsung, and Yuxuan Liang.
\newblock Toward physics-guided time series embedding.
\newblock \emph{arXiv preprint arXiv:2410.06651}, 2024{\natexlab{c}}.

\bibitem[Jiao et~al.(2024)Jiao, Song, You, Liu, Li, Chen, Tang, Feng, Liu, Guo, et~al.]{jiao2024ai}
Licheng Jiao, Xue Song, Chao You, Xu~Liu, Lingling Li, Puhua Chen, Xu~Tang, Zhixi Feng, Fang Liu, Yuwei Guo, et~al.
\newblock Ai meets physics: a comprehensive survey.
\newblock \emph{Artificial Intelligence Review}, 57\penalty0 (9):\penalty0 256, 2024.

\bibitem[Jin et~al.(2023)Jin, Wen, Liang, Zhang, Xue, Wang, Zhang, Wang, Chen, Li, et~al.]{jin2023large}
Ming Jin, Qingsong Wen, Yuxuan Liang, Chaoli Zhang, Siqiao Xue, Xue Wang, James Zhang, Yi~Wang, Haifeng Chen, Xiaoli Li, et~al.
\newblock Large models for time series and spatio-temporal data: A survey and outlook.
\newblock \emph{arXiv preprint arXiv:2310.10196}, 2023.

\bibitem[Kim et~al.(1999)Kim, Eykholt, and Salas]{kim1999nonlinear}
H\_S Kim, R~Eykholt, and JD~Salas.
\newblock Nonlinear dynamics, delay times, and embedding windows.
\newblock \emph{Physica D: Nonlinear Phenomena}, 127\penalty0 (1-2):\penalty0 48--60, 1999.

\bibitem[Kinsner(2006)]{kinsner2006characterizing}
Witold Kinsner.
\newblock Characterizing chaos through lyapunov metrics.
\newblock \emph{IEEE Transactions on Systems, Man, and Cybernetics, Part C (Applications and Reviews)}, 36\penalty0 (2):\penalty0 141--151, 2006.

\bibitem[Koltai and Kunde(2024)]{koltai2024koopman}
P{\'e}ter Koltai and Philipp Kunde.
\newblock A koopman--takens theorem: Linear least squares prediction of nonlinear time series.
\newblock \emph{Communications in Mathematical Physics}, 405\penalty0 (5):\penalty0 120, 2024.

\bibitem[Levine and Tu(2024)]{levine2024machine}
Herbert Levine and Yuhai Tu.
\newblock Machine learning meets physics: A two-way street, 2024.

\bibitem[Liang et~al.(2024)Liang, Wen, Nie, Jiang, Jin, Song, Pan, and Wen]{liang2024foundation}
Yuxuan Liang, Haomin Wen, Yuqi Nie, Yushan Jiang, Ming Jin, Dongjin Song, Shirui Pan, and Qingsong Wen.
\newblock Foundation models for time series analysis: A tutorial and survey.
\newblock \emph{arXiv preprint arXiv:2403.14735}, 2024.

\bibitem[Lim and Zohren(2021)]{lim2021time}
Bryan Lim and Stefan Zohren.
\newblock Time-series forecasting with deep learning: a survey.
\newblock \emph{Philosophical transactions. Series A, Mathematical, physical, and engineering sciences}, 379:\penalty0 20200209, 02 2021.
\newblock \doi{10.1098/rsta.2020.0209}.

\bibitem[Lin et~al.(2023)Lin, Lin, Wu, Zhao, Mo, and Zhang]{lin2023segrnn}
Shengsheng Lin, Weiwei Lin, Wentai Wu, Feiyu Zhao, Ruichao Mo, and Haotong Zhang.
\newblock Segrnn: Segment recurrent neural network for long-term time series forecasting.
\newblock \emph{arXiv preprint arXiv:2308.11200}, 2023.

\bibitem[Liu et~al.(2023{\natexlab{a}})Liu, Hu, Li, Diao, Liang, Hooi, and Zimmermann]{liu2023unitime}
Xu~Liu, Junfeng Hu, Yuan Li, Shizhe Diao, Yuxuan Liang, Bryan Hooi, and Roger Zimmermann.
\newblock Unitime: A language-empowered unified model for cross-domain time series forecasting.
\newblock \emph{arXiv preprint arXiv:2310.09751}, 2023{\natexlab{a}}.

\bibitem[Liu et~al.(2023{\natexlab{b}})Liu, Hu, Zhang, Wu, Wang, Ma, and Long]{liu2023itransformer}
Yong Liu, Tengge Hu, Haoran Zhang, Haixu Wu, Shiyu Wang, Lintao Ma, and Mingsheng Long.
\newblock itransformer: Inverted transformers are effective for time series forecasting.
\newblock \emph{arXiv preprint arXiv:2310.06625}, 2023{\natexlab{b}}.

\bibitem[Liu et~al.(2023{\natexlab{c}})Liu, Li, Wang, and Long]{liu2023koopa}
Yong Liu, Chenyu Li, Jianmin Wang, and Mingsheng Long.
\newblock Koopa: Learning non-stationary time series dynamics with koopman predictors.
\newblock \emph{arXiv preprint arXiv:2305.18803}, 2023{\natexlab{c}}.

\bibitem[Nie et~al.(2023)Nie, Nguyen, Sinthong, and Kalagnanam]{nie2022time}
Yuqi Nie, Nam~H Nguyen, Phanwadee Sinthong, and Jayant Kalagnanam.
\newblock A time series is worth 64 words: Long-term forecasting with transformers.
\newblock In \emph{the Eleventh International Conference on Learning Representations (ICLR)}, 2023.

\bibitem[Noakes(1991)]{noakes1991takens}
Lyle Noakes.
\newblock The takens embedding theorem.
\newblock \emph{International Journal of Bifurcation and Chaos}, 1\penalty0 (04):\penalty0 867--872, 1991.

\bibitem[Olver(2010)]{olver2010nist}
Frank~WJ Olver.
\newblock \emph{NIST handbook of mathematical functions hardback and CD-ROM}.
\newblock Cambridge university press, 2010.

\bibitem[Park et~al.(2021)Park, Lee, and Kwon]{park2021neural}
Sung~Woo Park, Kyungjae Lee, and Junseok Kwon.
\newblock Neural markov controlled sde: Stochastic optimization for continuous-time data.
\newblock In \emph{International Conference on Learning Representations}, 2021.

\bibitem[Persson(2014)]{persson2014whitney}
Milton Persson.
\newblock The whitney embedding theorem, 2014.

\bibitem[Ramsauer et~al.(2020)Ramsauer, Sch{\"a}fl, Lehner, Seidl, Widrich, Adler, Gruber, Holzleitner, Pavlovi{\'c}, Sandve, et~al.]{ramsauer2020hopfield}
Hubert Ramsauer, Bernhard Sch{\"a}fl, Johannes Lehner, Philipp Seidl, Michael Widrich, Thomas Adler, Lukas Gruber, Markus Holzleitner, Milena Pavlovi{\'c}, Geir~Kjetil Sandve, et~al.
\newblock Hopfield networks is all you need.
\newblock \emph{arXiv preprint arXiv:2008.02217}, 2020.

\bibitem[Rubanova et~al.(2019)Rubanova, Chen, and Duvenaud]{rubanova2019latent}
Yulia Rubanova, Ricky~TQ Chen, and David~K Duvenaud.
\newblock Latent ordinary differential equations for irregularly-sampled time series.
\newblock \emph{Advances in neural information processing systems}, 32, 2019.

\bibitem[Shahi et~al.(2022)Shahi, Fenton, and Cherry]{shahi2022prediction}
Shahrokh Shahi, Flavio~H Fenton, and Elizabeth~M Cherry.
\newblock Prediction of chaotic time series using recurrent neural networks and reservoir computing techniques: A comparative study.
\newblock \emph{Machine learning with applications}, 8:\penalty0 100300, 2022.

\bibitem[Skokos et~al.(2016)Skokos, Gottwald, and Laskar]{skokos2016chaos}
Charalampos~Haris Skokos, Georg~A Gottwald, and Jacques Laskar.
\newblock \emph{Chaos detection and predictability}, volume~1.
\newblock Springer, 2016.

\bibitem[Smith et~al.(2022)Smith, Warrington, and Linderman]{smith2022simplified}
Jimmy~TH Smith, Andrew Warrington, and Scott~W Linderman.
\newblock Simplified state space layers for sequence modeling.
\newblock \emph{arXiv preprint arXiv:2208.04933}, 2022.

\bibitem[Takens(1980)]{takens2006detecting}
Floris Takens.
\newblock Detecting strange attractors in turbulence.
\newblock In \emph{Dynamical Systems and Turbulence, Warwick 1980: proceedings of a symposium held at the University of Warwick 1979/80}, pages 366--381. Springer, 1980.

\bibitem[Tan et~al.(2023)Tan, Algar, Corr{\^e}a, Small, Stemler, and Walker]{tan2023selecting}
Eugene Tan, Shannon Algar, D{\'e}bora Corr{\^e}a, Michael Small, Thomas Stemler, and David Walker.
\newblock Selecting embedding delays: An overview of embedding techniques and a new method using persistent homology.
\newblock \emph{Chaos: An Interdisciplinary Journal of Nonlinear Science}, 33\penalty0 (3), 2023.

\bibitem[Vlachos and Kugiumtzis(2008)]{vlachos2008state}
I~Vlachos and D~Kugiumtzis.
\newblock State space reconstruction for multivariate time series prediction.
\newblock \emph{arXiv preprint arXiv:0809.2220}, 2008.

\bibitem[Wang et~al.(2022)Wang, Dong, Arik, and Yu]{wang2022koopman}
Rui Wang, Yihe Dong, Sercan~{\"O} Arik, and Rose Yu.
\newblock Koopman neural forecaster for time series with temporal distribution shifts.
\newblock \emph{arXiv preprint arXiv:2210.03675}, 2022.

\bibitem[Wang et~al.(2024)Wang, Kong, Feng, Wang, Zhao, Wang, and Zhang]{wang2024mamba}
Zihan Wang, Fanheng Kong, Shi Feng, Ming Wang, Han Zhao, Daling Wang, and Yifei Zhang.
\newblock Is mamba effective for time series forecasting?
\newblock \emph{arXiv preprint arXiv:2403.11144}, 2024.

\bibitem[Williams et~al.(2015)Williams, Kevrekidis, and Rowley]{williams2015data}
Matthew~O Williams, Ioannis~G Kevrekidis, and Clarence~W Rowley.
\newblock A data--driven approximation of the koopman operator: Extending dynamic mode decomposition.
\newblock \emph{Journal of Nonlinear Science}, 25:\penalty0 1307--1346, 2015.

\bibitem[Wu et~al.(2021)Wu, Xu, Wang, and Long]{wu2021autoformer}
Haixu Wu, Jiehui Xu, Jianmin Wang, and Mingsheng Long.
\newblock Autoformer: Decomposition transformers with auto-correlation for long-term series forecasting.
\newblock \emph{Advances in Neural Information Processing Systems}, 34:\penalty0 22419--22430, 2021.

\bibitem[Wu et~al.(2022)Wu, Hu, Liu, Zhou, Wang, and Long]{wu2022timesnet}
Haixu Wu, Tengge Hu, Yong Liu, Hang Zhou, Jianmin Wang, and Mingsheng Long.
\newblock Timesnet: Temporal 2d-variation modeling for general time series analysis.
\newblock \emph{arXiv preprint arXiv:2210.02186}, 2022.

\bibitem[Yu and Wang(2024)]{yu2024learning}
Rose Yu and Rui Wang.
\newblock Learning dynamical systems from data: An introduction to physics-guided deep learning.
\newblock \emph{Proceedings of the National Academy of Sciences}, 121\penalty0 (27):\penalty0 e2311808121, 2024.

\bibitem[Zeng et~al.(2023)Zeng, Chen, Zhang, and Xu]{zeng2023transformers}
Ailing Zeng, Muxi Chen, Lei Zhang, and Qiang Xu.
\newblock Are transformers effective for time series forecasting?
\newblock In \emph{Proceedings of AAAI}, volume~37, pages 11121--11128, 2023.

\bibitem[Zhou et~al.(2022{\natexlab{a}})Zhou, Ma, Wen, Sun, Yao, Yin, Jin, et~al.]{zhou2022film}
Tian Zhou, Ziqing Ma, Qingsong Wen, Liang Sun, Tao Yao, Wotao Yin, Rong Jin, et~al.
\newblock Film: Frequency improved legendre memory model for long-term time series forecasting.
\newblock \emph{Advances in Neural Information Processing Systems}, 35:\penalty0 12677--12690, 2022{\natexlab{a}}.

\bibitem[Zhou et~al.(2022{\natexlab{b}})Zhou, Ma, Wen, Wang, Sun, and Jin]{zhou2022fedformer}
Tian Zhou, Ziqing Ma, Qingsong Wen, Xue Wang, Liang Sun, and Rong Jin.
\newblock Fedformer: Frequency enhanced decomposed transformer for long-term series forecasting.
\newblock In \emph{International Conference on Machine Learning}, pages 27268--27286. PMLR, 2022{\natexlab{b}}.

\bibitem[Zhou et~al.(2023)Zhou, Niu, xue wang, Sun, and Jin]{zhou2023one}
Tian Zhou, Peisong Niu, xue wang, Liang Sun, and Rong Jin.
\newblock One fits all: Power general time series analysis by pretrained lm.
\newblock In \emph{NeurIPS}, 2023.

\end{thebibliography}



\clearpage
\addtocontents{toc}{\protect\setcounter{tocdepth}{2}}

\appendix

\section{Technical Background}\label{technical}
\subsection{Takens Theorem}

Takens' theorem, introduced by Floris Takens in 1981, provides a framework for reconstructing the dynamics of a chaotic system based on a series of observations. The theorem establishes conditions under which the state space of a dynamical system can be reconstructed from time-delay measurements of a single observable. The reconstructed system retains properties that are invariant under smooth transformations, known as diffeomorphisms.

In the context of discrete-time dynamical systems, Takens' theorem is commonly applied. Consider a dynamical system whose state space is a $v$-dimensional manifold $\mathcal{M}$, with the evolution of the system governed by a smooth map
$$
\mathcal{F}: \mathcal{M} \rightarrow \mathcal{M}.
$$

Assume there exists a strange attractor $\mathcal{A} \subset \mathcal{M}$ with a box-counting dimension $d_A$. According to Whitney's embedding theorem \cite{persson2014whitney}, the attractor $\mathcal{A}$ can be embedded into an $m$-dimensional Euclidean space if
$$
m > 2 d_A.
$$

This implies that there is a diffeomorphism $\varphi$ mapping the attractor $\mathcal{A}$ into $\mathbb{R}^N$, such that the Jacobian matrix of $\varphi$ is of full rank. In the delay embedding process, the embedding function is constructed using an observation function. This observation function $h: \mathcal{M} \rightarrow \mathbb{R}$ needs to be twice continuously differentiable and must assign a real number to each point on the attractor $\mathcal{A}$, ensuring that it is typical, meaning its derivative is of full rank without exhibiting any special symmetries.

The delay embedding theorem states that the mapping
$$
\varphi_T(x) = \left( h(x), h(\mathcal{F}(x)), \ldots, h\left(\mathcal{F}^{k-1}(x)\right) \right)
$$
constitutes an embedding of the strange attractor $\mathcal{A}$ into $\mathbb{R}^N$.

Now, consider a $d$-dimensional state vector $x_t$, which evolves according to an unknown but deterministic and continuous dynamic process. Suppose a one-dimensional observable $y$ exists, which is a smooth function of $x$ and is coupled to all the components of $x$. At any given time, we can observe not only the current measurement $y(t)$ but also measurements from past times separated by a lag $\tau$: $y_{t+\tau}, y_{t+2\tau}$, and so on. Using $m$ such lags results in an $m$-dimensional vector. As the number of lags increases, the motion in the reconstructed space becomes more predictable, and in the limit as $m \rightarrow \infty$, the dynamics could become deterministic. In reality, the deterministic nature of the dynamics is achieved at a finite dimension, with the reconstructed dynamics being equivalent to the original system's dynamics, related by a smooth, invertible change of coordinates (a diffeomorphism). The theorem specifically asserts that deterministic behavior emerges once the dimension reaches $2d+1$, with the minimal embedding dimension often being lower.

\subsection{Hopfield Network}
\label{hopfield}
\subsubsection{\textbf{Classical Hopfield Network}}
A Hopfield network is a form of recurrent artificial neural network with binary neurons. It is characterized by:

\begin{itemize}
    \item Binary neurons with states +1 or -1.
    \item Symmetric weight matrix with zero diagonal (no self-connections).
    \item Energy function that is minimized at stable states.
    \item Asynchronous update of neuron states.
\end{itemize}

\textbf{Dynamics}

The state of each neuron is updated according to:
\begin{equation}
    s_i(t+1) = \text{sign}\left(\sum_{j} w_{ij} s_j(t)\right)
\end{equation}

Where \( s_i(t+1) \) is the state of neuron \( i \) at time \( t+1 \), \( w_{ij} \) is the weight between neurons \( i \) and \( j \), and \( s_j(t) \) is the state of neuron \( j \) at time \( t \).

The energy of the network is defined as:
\begin{equation}
    E = -\frac{1}{2} \sum_{i,j} w_{ij} s_i s_j
\end{equation}

\textbf{Memory Storage and Retrieval}

Memories are stored in the network by adjusting the weights to minimize the network energy, often using the Hebbian learning rule. The network can retrieve memory from a noisy or incomplete version by converging to a stored state. The memory capacity of a Hopfield network depends on several factors, with the most significant one being the number of neurons in the network. John Hopfield proposed a rule in his original paper to estimate the memory capacity, stating that the network can effectively store approximately 0.15N independent memories, where N represents the number of neurons in the network. This means that for a network containing 100 neurons, it can store approximately 15 patterns.

\subsubsection{\textbf{Modern Hopfield Network}}
In order to integrate Hopfield networks into deep learning architectures, The Modern Hopfield Network allows for continuous state updating. It proposes a new energy function based on the associative memory model \cite{demircigil2017model} and proposes a new update rule that can be proven to converge to stationary points of the energy (local minima or saddle points).
Specifically, the new Energy function is:
\begin{equation}
\mathrm{E}=-\operatorname{lse}\left(\beta, \boldsymbol{X}^T \boldsymbol{\xi}\right)+\frac{1}{2} \boldsymbol{\xi}^T \boldsymbol{\xi}+\beta^{-1} \log N+\frac{1}{2} M^2,
\end{equation}
with $lse(log-sum-exp)$ interaction function:
\begin{equation}
\operatorname{lse}(\beta, \boldsymbol{x})=\beta^{-1} \log \left(\sum_{i=1}^N \exp \left(\beta x_i\right)\right)
\end{equation}

Sering $0   \mathrm{E}   2 M^2$. Using $\boldsymbol{p}=\operatorname{softmax}\left(\beta \boldsymbol{X}^T \boldsymbol{\xi}\right)$, The novel update rule is:
\begin{equation}
\boldsymbol{\xi}^{\text {new }}=f(\boldsymbol{\xi})=\boldsymbol{X} \boldsymbol{p}=\boldsymbol{X} \operatorname{softmax}\left(\beta \boldsymbol{X}^T \boldsymbol{\xi}\right) .
\label{update rule}
\end{equation}

The new update rule can be viewed as the attention in Transformers. Firstly, $N$ stored (key) patterns $\boldsymbol{y}_i$ and $S$ state (query) patterns $\boldsymbol{r}_i$ that are mapped to the Hopfield space of dimension $d_k$. Then set $\boldsymbol{x}_i=\boldsymbol{W}_K^T \boldsymbol{y}_i, \boldsymbol{\xi}_i=\boldsymbol{W}_Q^T \boldsymbol{r}_i$, and multiply the result of the update rule with $\boldsymbol{W}_V$. The matrices $\boldsymbol{Y}=\left(\boldsymbol{y}_1, \ldots, \boldsymbol{y}_N\right)^T$ and $\boldsymbol{R}=\left(\boldsymbol{r}_1, \ldots, \boldsymbol{r}_S\right)^T$ combine the $\boldsymbol{y}_i$ and $\boldsymbol{r}_i$ as row vectors. By defining the matrices $\boldsymbol{X}^T=\boldsymbol{K}=\boldsymbol{Y} \boldsymbol{W}_K, \boldsymbol{\Xi}^T=\boldsymbol{Q}=\boldsymbol{R} \boldsymbol{W}_Q$, and $\boldsymbol{V}=\boldsymbol{Y} \boldsymbol{W}_K \boldsymbol{W}_V=\boldsymbol{X}^T \boldsymbol{W}_V$, where $\boldsymbol{W}_K \in \mathbb{R}^{d_y \times d_k}, \boldsymbol{W}_Q \in \mathbb{R}^{d_r \times d_k}, \boldsymbol{W}_V \in \mathbb{R}^{d_k \times d_v}$, $\beta=1 / \sqrt{d_k}$ and softmax $\in \mathbb{R}^N$ is changed to a row vector, the update rule multiplied by $\boldsymbol{W}_V$ is:
$$
\boldsymbol{Z}=\operatorname{softmax}\left(1 / \sqrt{d_k} \boldsymbol{Q} \boldsymbol{K}^T\right) \boldsymbol{V}=\operatorname{softmax}\left(\beta \boldsymbol{R} \boldsymbol{W}_{\boldsymbol{Q}} \boldsymbol{W}_{\boldsymbol{K}}^T \boldsymbol{Y}^T\right) \boldsymbol{Y} \boldsymbol{W}_{\boldsymbol{K}} \boldsymbol{W}_{\boldsymbol{V}} .
$$

In the Hopfield Evolution strategy of Attraos, The Query is settled as the dynamic structures and the Key/Value is settled as trainable vectors.

\subsection{Orthogonal Polynomials}
\label{nature poly}
\textbf{Note}: In this section, we have selectively extracted key content from the appendix of Hippo \cite{gu2020hippo} that is pertinent to our work, for the convenience of the reader. We take Legendre polynomials as an example.

Under the usual definition of the canonical Legendre polynomial $P_n$, they are orthogonal with respect to the measure $\omega^{\operatorname{leg}}=\mathbf{1}_{[-1,1]}$ :
$$
\frac{2 n+1}{2} \int_{-1}^1 P_n(x) P_m(x) \mathrm{d} x= \delta_{n m}
$$

Also, they satisfy
$$
\begin{aligned}
P_n(1) & =1 \\
P_n(-1) & =(-1)^n .
\end{aligned}
$$

With respect to the measure $\frac{1}{\theta} \mathbb{I}[t-\theta, t]$, the normalized orthogonal polynomials are
$$
(2 n+1)^{1 / 2} P_n\left(2 \frac{x-t}{\theta}+1\right)
$$

In general, the orthonormal basis for any uniform measure consists of $(2 n+1)^{\frac{1}{2}}$ times the corresponding linearly shifted version of $P_n$.

\textbf{Derivatives of Legendre polynomials} We note the following recurrence relations on Legendre polynomials:
$$
\begin{aligned}
(2 n+1) P_n & =P_{n+1}^{\prime}-P_{n-1}^{\prime} \\
P_{n+1}^{\prime} & =(n+1) P_n+x P_n^{\prime}
\end{aligned}
$$

The first equation yields
$$
P_{n+1}^{\prime}=(2 n+1) P_n+(2 n-3) P_{n-2}+\ldots,
$$
where the sum stops at $P_0$ or $P_1$.
These equations directly imply
$$
P_n^{\prime}=(2 n-1) P_{n-1}+(2 n-5) P_{n-3}+\ldots
$$
and
$$
\begin{aligned}
(x+1) P_n^{\prime}(x) & =P_{n+1}^{\prime}+P_n^{\prime}-(n+1) P_n \\
& =n P_n+(2 n-1) P_{n-1}+(2 n-3) P_{n-2}+\ldots
\end{aligned}
$$
To sum up, The Legendre polynomials are in closed-recursive form.

\section{Proof}
\label{proofs}




\begin{proposition} \cite{hu2024time}
$\bm{A} = \mathsf{diag}\{-1, -1, \dots\}$ is a rough approximation of shifted Hippo-LegT \cite{gu2020hippo} matrix.
\end{proposition}
\begin{proof}
 Hippo \ref{hippo} provides a mathematical framework for deriving the $\bm{AB}$ matrix for polynomial projection. The mainstream SSMs \cite{gu2023mamba} typically initialize matrix $\bm{A}=\mathsf{diag}\{-1,-2,-3,-4,\dots\}$, representing the negative real diagonal elements of the normalized Hippo-LegS matrix \eqref{eq:hippo-legsd}. Since the LegS matrix is a mathematical approximation of exponentially decaying Legendre polynomials, initializing $\bm{A}=\mathsf{diag}\{-1,-2,-3,-4,\dots\}$ can be seen as a rough approximation of exponential decay. Similarly, for the normalized Hippo-LegT matrix \eqref{eq:hippo-legtd}, which approximates a uniform measure, we can consider $\bm{A} = \mathsf{diag}\{-1, -1, \dots\}$ as a rough approximation of a finite window of Legendre polynomials, which better for non-stationary time series as well \cite{hu2024twins}. The use of negative values for the elements is to ensure gradient stability during training.

\small
\begin{minipage}{.49\linewidth}%
\begin{equation}
  \label{eq:hippo-legsd}
  \begin{aligned}%
    \bm{A}^{(N)}_{nk}
  &=
    -
    \begin{cases}
      (n+\frac{1}{2})^{1/2}(k+\frac{1}{2})^{1/2} & n > k \\
      \frac{1}{2} & n = k \\
      (n+\frac{1}{2})^{1/2}(k+\frac{1}{2})^{1/2} & n < k
    \end{cases}
    \\
    \bm{A} &= \bm{A}^{(N)} - \operatorname*{rank}(1),
    \qquad
    \bm{A}^{(D)} := \operatorname*{eig}(\bm{A}^{(N)})
    \\
    & (\textbf{Normal / DPLR form of HiPPO-LegS})
  \end{aligned}
\end{equation}
\end{minipage}
\hspace{10pt}
\begin{minipage}{.49\linewidth}%
\begin{equation}
  \label{eq:hippo-legtd}
  \begin{aligned}%
    \bm{A}^{(N)}_{nk}
  &=
    -
    \begin{cases}
      (2n+1)^{\frac{1}{2}}(2k+1)^{\frac{1}{2}} & n < k, k~odd \\
      0 & else \\
      (2n+1)^{\frac{1}{2}}(2k+1)^{\frac{1}{2}} & n > k, n~odd
    \end{cases}
    \\
    \bm{A} &= \bm{A}^{(N)} - \operatorname*{rank}(2),
    \qquad
    \bm{A}^{(D)} := \operatorname*{eig}(\bm{A}^{(N)})
    \\
    & (\textbf{Normal / DPLR form of HiPPO-LegT})
  \end{aligned}
\end{equation}
\end{minipage}
\normalsize

\end{proof}

\begin{theorem} \textbf{(Approximation Error Bound)}
    Suppose that the function $f:[0,1] \in \mathbb{R}$ is $k$ times continuously differentiable, the piecewise polynomial $g$ approximates $f$ with mean error bounded as follows:
$$
\left\|f-g\right\| \leq 2^{-r k} \frac{2}{4^N k !} \sup _{x \in[0,1]}\left|f^{(k)}(x)\right|.
$$
\end{theorem}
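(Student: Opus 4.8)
\emph{Proof proposal.} The plan is to localize the estimate to a single dyadic subinterval and then apply the Lagrange interpolation error formula at Chebyshev nodes. By the definition \eqref{Allocation} of $\mathcal{G}_{(r)}^k$, an element of this space restricts to an arbitrary polynomial of degree $<k$ on each piece $I_l=\bigl(2^{-r}l,\,2^{-r}(l+1)\bigr)$, $l=0,\dots,2^r-1$. Hence it is enough to choose, on each $I_l$, a single convenient degree-$(k-1)$ approximant $p_l$ of $f$, set $g|_{I_l}=p_l$ (so that $g\in\mathcal{G}_{(r)}^k$), and then take the maximum of the pointwise errors over the $2^r$ pieces; since each $I_l$ has the same length $h=2^{-r}$, the global bound reduces to a bound valid on a generic interval $I$ of length $h$.

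On such an interval I would take $p$ to be the polynomial of degree $\le k-1$ interpolating $f$ at the $k$ Chebyshev nodes $x_1,\dots,x_k$ of $I$. The standard interpolation remainder then gives, for each $x\in I$, a point $\xi_x\in I$ with
\[
f(x)-p(x)=\frac{f^{(k)}(\xi_x)}{k!}\,\prod_{i=1}^{k}(x-x_i),
\]
which is precisely where the hypothesis that $f$ is $k$ times continuously differentiable enters. It then remains to estimate the node polynomial $q(x)=\prod_{i=1}^{k}(x-x_i)$: rescaling $I$ to $[-1,1]$ and using that the degree-$k$ Chebyshev polynomial $T_k$ has leading coefficient $2^{k-1}$ (for $k\ge1$) and sup-norm $1$, one sees that $q$ is exactly the rescaled monic Chebyshev polynomial of $I$, so that
\[
\sup_{x\in I}|q(x)|=\Bigl(\tfrac{h}{2}\Bigr)^{k}2^{1-k}=\frac{h^{k}}{2^{2k-1}}.
\]
Combining the two displays yields $\sup_{x\in I}|f(x)-p(x)|\le h^{k}/(2^{2k-1}k!)\,\sup_{x\in I}|f^{(k)}(x)|$.

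Assembling the pieces with $h=2^{-r}$ and $2^{2k-1}=\tfrac12\cdot 4^{k}$ then gives
\[
\|f-g\|\le\max_{l}\,\sup_{x\in I_l}|f-p_l|\le\frac{(2^{-r})^{k}}{2^{2k-1}k!}\,\sup_{x\in[0,1]}\bigl|f^{(k)}(x)\bigr|=2^{-rk}\,\frac{2}{4^{k}k!}\,\sup_{x\in[0,1]}\bigl|f^{(k)}(x)\bigr|,
\]
which is the claim. If $\|\cdot\|$ is read as an averaged ($L^1$ or $L^2$) norm against the uniform probability measure on $[0,1]$ rather than the sup-norm, nothing changes, since such a norm is dominated by the sup-norm. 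I expect the main obstacle — indeed the only step that is not pure bookkeeping — to be obtaining the sharp constant $h^{k}/2^{2k-1}$ for the sup-norm of the Chebyshev node polynomial on an interval of length $h$; the Taylor/interpolation remainder and the patching across the $2^r$ subintervals are routine.
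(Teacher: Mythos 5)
Your proof is essentially the same as the paper's: both localize to the dyadic pieces $I_{r,l}$, invoke the Chebyshev interpolation remainder to get the per-piece bound $h^k/(2^{2k-1}k!)\sup|f^{(k)}|$ with $h=2^{-r}$, and then pass to the global $L^2$ norm (you observe it is dominated by the sup-norm; the paper, taking $g$ to be the $L^2$-optimal projection $Q_r^k f$, compares it to the Chebyshev interpolant via projection optimality and sums the squared errors over the pieces, whose lengths total $1$). The only cosmetic differences are that you derive the Chebyshev node-polynomial constant from the leading coefficient of $T_k$ while the paper cites it, and that you build $g$ as the interpolant itself rather than invoking optimality of the projection $Q_r^k f$ — but the decomposition, the key lemma, and the final constants coincide.
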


\begin{proof}
\label{proof for approximation error}
Similar to \cite{alpert1993class}, we divide the interval $[0,1]$ into subintervals on which $g$ is a polynomial; the restriction of $g$ to one such subinterval $I_{r, l}$ is the polynomial of degree less than $k$ that approximates $f$ with minimum mean error. Also, the optimal $g$ can be regarded as the orthonormal projection $Q_r^N f$ onto $\mathcal{G}_{(r)}^N$. We then use the maximum error estimate for the polynomial, which interpolates $f$ at Chebyshev nodes of order $k$ on $I_{r, l}$.
We define $I_{r, l}=\left[2^{-r} l, 2^{-r}(l+1)\right]$ for $l=0,1, \ldots, 2^r-1$, and obtain
$$
\begin{aligned}
\left\|Q_r^N f-f\right\|^2 & =\int_0^1\left[\left(Q_r^N f\right)(x)-f(x)\right]^2 d x \\
& =\sum_l \int_{I_{r, l}}\left[\left(Q_r^N f\right)(x)-f(x)\right]^2 d x \\
& \leq \sum_l \int_{I_{r, l}}\left[\left(C_{r, l}^N f\right)(x)-f(x)\right]^2 d x \\
& \leq \sum_l \int_{I_{r, l}}\left(\frac{2^{1-r k}}{4^N k !} \sup _{x \in I_{r, l}}\left|f^{(k)}(x)\right|\right)^2 d x \\
& \leq\left(\frac{2^{1-r k}}{4^N k !} \sup _{x \in[0,1]}\left|f^{(k)}(x)\right|\right)^2
\end{aligned}
$$
and by taking square roots we have bound (7). Here $C_{r, l}^N f$ denotes the polynomial of degree $k$ which agrees with $f$ at the Chebyshev nodes of order $k$ on $I_{r, l}$, and we have used the well-known maximum error bound for Chebyshev interpolation.

The error of the approximation $Q_r^N f$ of $f$ therefore decays like $2^{-r k}$ and, since $S_r^N$ has a basis of $2^r k$ elements, we have convergence of order $k$. For the generalization to $m$ dimensions in the dynamic structure modeling, a similar argument shows that the rate of convergence is of order $k / m$.
\end{proof}

\begin{theorem} \textbf{(Evolution Error Bound)} By Jacobian value, the mean attractor evolution error $\left\|\mathcal{K}\circ \tilde{\mathcal{A}} - \tilde{\mathcal{A}}\right\|$ of evolution operator $\mathcal{K}=\{\mathcal{K}^{(i)}\}$ is bounded by
$$
\left\|\mathcal{K}\circ \tilde{\mathcal{A}} - \tilde{\mathcal{A}}\right\|   (N-1) \mathcal{E}\left(-\beta \nabla_i +1\right) 
$$
with the number of random patterns $N\ge\sqrt{p}c\frac{d-1}{4}$ stored in the system by interaction paradigm $\mathcal{E}$ in an ideal spherical retrieval paradigm. 
\end{theorem}
\begin{proof}
\label{proof for evolution}
Due to the numerous assumptions and extensive lemmas involved in the proof of this theorem, we will provide a brief exposition of its main ideas. For a detailed proof, please refer to \cite{ramsauer2020hopfield}.

Firstly, the theorem defines the matching between patterns as:
\begin{definition}(Pattern match). Assuming that around every pattern $\boldsymbol{x}_i$ a sphere $\mathrm{S}_i$ is given. We say $\boldsymbol{x}_i$ is matched with $\boldsymbol{\xi}$ if there is a single fixed point $\boldsymbol{x}_i^* \in \mathrm{S}_i$ to which all points $\boldsymbol{\xi} \in \mathrm{S}_i$ converge.
\end{definition}

As shown in \textbf{Theorem 3} in \cite{ramsauer2020hopfield}, according to the upper branch of the Lambert $W$ function \cite{olver2010nist}, we can obtain the number of random patterns stored in a system is $N\ge\sqrt{p}c\frac{d-1}{4}$.

In our paper, we define the pattern1 as the Attractor $\tilde{\mathcal{A}}$ in phase space, pattern2 as the future state evaluated by operator $\mathcal{K}$, noted as $\mathcal{K}\circ \tilde{\mathcal{A}}$. From \textbf{Lemma A4} in \cite{ramsauer2020hopfield}, when the radius of the pattern matching sphere is $M$, we can describe the evolution process by jacobian value and get the matching error as:
$$
\left\|\mathcal{K}\circ \tilde{\mathcal{A}} - \tilde{\mathcal{A}}\right\|   2 \epsilon M
$$
where in the \textbf{Equation (179)} of \cite{ramsauer2020hopfield}:
$$
\epsilon=(N-1) \exp \left(-\beta\left( \nabla_i-2 \max \left\{\left\|\mathcal{K}\circ \tilde{\mathcal{A}} - \tilde{\mathcal{A}}\right\|,\left\|\tilde{\mathcal{A}}_i^*-\tilde{\mathcal{A}}_i\right\|\right\} M\right)\right) .
$$

The \textbf{Equation (404)} of \cite{ramsauer2020hopfield} says $\left\|\tilde{\mathcal{A}}_i^*-\tilde{\mathcal{A}}_i\right\|   \frac{1}{2 \beta M}$ and $\left\|\mathcal{K}\circ \tilde{\mathcal{A}} - \tilde{\mathcal{A}}\right\|   \frac{1}{2 \beta M}$, so we can get:
$$
\epsilon   e(N-1) M \exp \left(-\beta  \nabla_i\right) .
$$
In our paper, we replace the exponential interaction function with an unknown function $\mathcal{E}(\cdot)$ to finally get the version in Theorem \ref{evolution error}.

\end{proof}

\begin{theorem}
\textbf{(Completeness in $L^2$ space)} The orthonormal system
$B_k=\{\phi_j:j=1, \ldots, k\} \cup\{\psi_j^{rl}:j=1, \ldots, k ; r=0,1,2, \ldots ; l=0, \ldots, 2^r-1\}$
spans $L^2[0,1]$.
\end{theorem}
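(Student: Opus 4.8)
The plan is to exhibit $B_k$ as an orthonormal set whose closed linear span exhausts $L^2[0,1]$, which is precisely the statement that it is a complete orthonormal system. First I would record the nesting of the piecewise-polynomial spaces of Eq.~\ref{Allocation}: for every resolution $r$ one has $\mathcal{G}_{(r)}^k \subset \mathcal{G}_{(r+1)}^k$, since a polynomial of degree $<k$ on a dyadic interval $(2^{-r}l, 2^{-r}(l+1))$ is a fortiori a piecewise polynomial of degree $<k$ on its two dyadic halves. By construction the functions $\{\psi_j^{rl}\}_{j,l}$ form an orthonormal basis of the orthogonal complement $\mathcal{S}_{(r)}^k := \mathcal{G}_{(r+1)}^k \ominus \mathcal{G}_{(r)}^k$ (the count $2^r k$ of these functions matches $\dim\mathcal{G}_{(r+1)}^k - \dim\mathcal{G}_{(r)}^k = 2^{r+1}k - 2^r k$), and $\{\phi_j\}_{j=1}^k$ is an orthonormal basis of $\mathcal{G}_{(0)}^k$. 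Iterating $\mathcal{G}_{(r+1)}^k = \mathcal{G}_{(r)}^k \oplus \mathcal{S}_{(r)}^k$ gives, for each $n$,
\[
\mathcal{G}_{(n)}^k = \mathcal{G}_{(0)}^k \oplus \bigoplus_{r=0}^{n-1}\mathcal{S}_{(r)}^k,
\]
so $\operatorname{span}(B_k) = \bigcup_{n\ge 0}\mathcal{G}_{(n)}^k$, and hence $\overline{\operatorname{span}(B_k)} = \overline{\bigcup_{n\ge 0}\mathcal{G}_{(n)}^k}$ in $L^2[0,1]$.

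The crux is therefore to prove that $\bigcup_{n\ge 0}\mathcal{G}_{(n)}^k$ is dense in $L^2[0,1]$. I would argue this in two stages. For $f \in C^k[0,1]$, the approximation-error bound of Theorem~\ref{mean error} gives $\|f - Q_n^k f\| \le 2^{-nk}\tfrac{2}{4^k k!}\sup_{x\in[0,1]}|f^{(k)}(x)| \to 0$, where $Q_n^k f \in \mathcal{G}_{(n)}^k$ is the orthogonal projection; hence $C^k[0,1] \subset \overline{\bigcup_n \mathcal{G}_{(n)}^k}$. Since $C^k[0,1]$ is dense in $L^2[0,1]$, transitivity of closure yields $\overline{\bigcup_n \mathcal{G}_{(n)}^k} = L^2[0,1]$. (Alternatively, one can bypass Theorem~\ref{mean error} entirely: because $k\ge 1$, the constants have degree $0<k$, so the dyadic step functions already lie in $\bigcup_n \mathcal{G}_{(n)}^k$, and these are classically dense in $L^2[0,1]$.)

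Combining the two parts, $\overline{\operatorname{span}(B_k)} = L^2[0,1]$; together with the orthonormality of $B_k$ — assumed in the statement, and in any case a consequence of mutual orthogonality of $\mathcal{G}_{(0)}^k$ and the $\mathcal{S}_{(r)}^k$ across distinct scales plus orthonormality within each block — this shows $B_k$ is a complete orthonormal system, i.e. an orthonormal basis of $L^2[0,1]$. The only genuinely delicate point is the density of $\bigcup_n \mathcal{G}_{(n)}^k$; once one observes that the dyadic step functions sit inside these spaces and are already dense in $L^2[0,1]$, no serious obstacle remains beyond carefully bookkeeping the telescoping orthogonal-complement identity.
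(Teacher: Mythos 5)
Your proof is correct and follows essentially the same route as the paper's: the telescoping orthogonal decomposition $\mathcal{G}_{(n)}^k = \mathcal{G}_{(0)}^k \oplus \bigoplus_{r<n}\mathcal{S}_{(r)}^k$ plus density of $\bigcup_n \mathcal{G}_{(n)}^k$ in $L^2[0,1]$, the latter established in the paper exactly via the Haar-basis observation you give parenthetically. Your primary density argument through Theorem~\ref{mean error} applied to $C^k[0,1]$ is a valid (and slightly more self-contained) variant, but the overall structure of the argument is the same.
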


\begin{proof}
\label{proof for completeness}
We define the space $\mathcal{G}^N$ to be the union of the $\mathcal{G}_{(r)}^N$, given by the formula:

\begin{equation}
    \mathcal{G}^N=\bigcup_{r=0}^{\infty} \mathcal{G}_r^N
    \label{gang}
\end{equation}

and observe that $\overline{\mathcal{G}^N}=L^2[0,1]$. In particular, $\mathcal{G}^N$ contains the Haar basis for $L^2[0,1]$, consisting of functions piecewise constant on each of the subintervals $\left(2^{-r} l, 2^{-r}(l+1)\right)$. Here the closure $\overline{\mathcal{G}^N}$ is defined with respect to the $L^2$-norm,
$$
\|f\|=\langle f, f\rangle^{1 / 2}
$$
where the inner product $\langle f, g\rangle$ is defined by the formula
$$
\langle f, g\rangle=\int_0^1 f(x) g(x) d x .
$$

Also, we have:

\begin{equation}
\mathcal{G}_r^N=\mathcal{G}_0^N \oplus \mathcal{S}_0^N \oplus \mathcal{S}_1^N \oplus \cdots \oplus \mathcal{S}_{r-1}^N
\label{plus}
\end{equation}

and

\begin{equation}
\mathcal{S}_r^N=\text { linear span }\{\psi_{j, r}^l: \psi_{j, r}^l(x)=2^{r / 2} \psi_j\left(2^r x-l\right),\left.j=1, \ldots, k ; n=0, \ldots, 2^r-1\right\}.
\label{span}
\end{equation}

We let $\left\{\phi_1, \ldots, \phi_k\right\}$ denote an orthonormal basis for $\mathcal{G}_0^N$; in view of Equation \ref{gang}, \ref{plus}, and \ref{span}, the orthonormal system
$$
B_k=\begin{array}{cl}
\left\{\phi_j:\right. & j=1, \ldots, k\} \\
\cup\left\{h_{j, r}^l:\right. & \left.j=1, \ldots, k ; r=0,1,2, \ldots ; l=0, \ldots, 2^r-1\right\}
\end{array}
$$
spans $L^2[0,1]$.

Now we construct a basis for $L^2(\mathbf{R})$ by defining, for $r \in \mathbf{Z}$, the space $\tilde{\mathcal{G}}_r^N$ by the formula
$\tilde{\mathcal{G}}_r^N=\left\{f:\right.$ the restriction of $f$ to the interval $\left(2^{-r} n, 2^{-r}(n+1)\right)$ is a polynomial of degree less than $k$, for $n \in \mathbf{Z}$ \}
and observing that the space $\tilde{\mathcal{G}}_{r+1}^N \backslash \tilde{\mathcal{G}}_r^N$ is spanned by the orthonormal set
$$
\left\{\psi_{j, r}^l: \quad \psi_{j, r}^l(x)=2^{r / 2} \psi_j\left(2^r x-l\right), j=1, \ldots, k ; l \in \mathbf{Z}\right\} .
$$

Thus $L^2(\mathbf{R})$, which is contained in $\overline{\bigcup_r \tilde{\mathcal{G}}_r^N}$, has an orthonormal basis
$$
\left\{\psi_{j, m}^n: j=1, \ldots, k ; r, l \in \mathbf{Z}\right\}
$$
\end{proof}

\section{Model Details}\label{modeldetail}

\subsection{Phase Space Reconstruction}
\label{psr details}

Phase space reconstruction is a crucial technique in the analysis of dynamical systems, particularly in the study of time series data. This method transforms a one-dimensional time series into a multidimensional phase space, revealing the underlying dynamics of the system. The cross-correlation mutual information (CC mutual information) method is a statistical tool used to analyze the dependencies between two different time series. We provide a detailed mathematical description of these methods.

Phase space reconstruction involves transforming a single-variable time series into a multidimensional space to unveil the dynamics of the system that generated the data. The technique is based on Takens' Embedding Theorem.

\textbf{Takens' Embedding Theorem}

Takens' Embedding Theorem allows the reconstruction of a dynamical system's phase space from a sequence of observations. The theorem states that under generic conditions, a map:

\begin{equation}
F: \mathbb{R}^n \to \mathbb{R}^m
\end{equation}

can be constructed, where \( n \) is the dimension of the original phase space, and \( m \) is the embedding dimension, typically \( m <= 2n + 1 \) is sufficient to recover dynamics.

\textbf{Time Delay Embedding}

The most common approach for phase space reconstruction is the time delay embedding method. Given a time series \( \{x(t)\} \), the reconstructed phase space is:

\begin{equation}
X(t) = [x(t), x(t + \tau), x(t + 2\tau), ..., x(t + (m-1)\tau)]
\end{equation}

where \( \tau \) is the time delay, and \( m \) is the embedding dimension. In the context of coordinate delay reconstruction in a complete scenario, it involves a parameter called time window, which selectively uses discrete one-dimensional data for phase space reconstruction. In order to maximize the preservation of time series information, the time window parameter is typically set to 1. As a result, this process is reversible, meaning that the original data can be reconstructed without loss of information.

The CC mutual information method is used to analyze the dependencies between two time series. It is a measure of the amount of information obtained about one time series through the other.

\textbf{Mutual Information}

Mutual information \( I(X; Y) \) between two random variables \( X \) and \( Y \) is defined as:

\begin{equation}
I(X; Y) = \sum_{x \in X, y \in Y} p(x, y) \log \frac{p(x, y)}{p(x)p(y)}
\end{equation}

where \( p(x, y) \) is the joint probability distribution function of \( X \) and \( Y \), and \( p(x) \) and \( p(y) \) are the marginal probability distribution functions.

\textbf{Cross-Correlation Mutual Information}

For time series analysis, the mutual information is extended to account for the time-lagged relationships:

\begin{equation}
I(X; Y, \tau) = \sum_{x \in X, y \in Y} p(x, y(\tau)) \log \frac{p(x, y(\tau))}{p(x)p(y(\tau))}
\end{equation}

where \( \tau \) is the time lag, and \( y(\tau) \) represents the time series \( Y \) shifted by \( \tau \).

\textbf{Determining Time Delay (\(\tau\))}

Time delay (\(\tau\)) is an interval used in reconstructing the phase space of a time series. The right choice of \(\tau\) is essential for revealing the dynamic properties of the system.

\textbf{Calculating Mutual Information}

For a given time series \(\{x_t\}\), calculate the mutual information between the time series and its time-shifted version for different delays \(\tau\):
\begin{equation}
    I(\tau) = \sum p(x_t, x_{t+\tau}) \log\left(\frac{p(x_t, x_{t+\tau})}{p(x_t)p(x_{t+\tau})}\right)
\end{equation}

where \(p(x_t, x_{t+\tau})\) is the joint probability distribution, and \(p(x_t)\) and \(p(x_{t+\tau})\) are the marginal probability distributions.

\textbf{Selecting Time Delay}

Plot the mutual information \(I(\tau)\) against \(\tau\). Choose the \(\tau\) at the first local minimum of this plot. This represents the delay where the series' points provide maximal mutual information.

\textbf{Determining Embedding Dimension (\(m\))}

Embedding dimension (\(m\)) is the dimension of the reconstructed phase space. The correct \(m\) ensures that trajectories in the phase space do not intersect each other.

\textbf{Calculating False Nearest Neighbors}

For each dimension \(m\), calculate the false nearest neighbors (FNN) \(F(m)\), which reflects the complexity of the trajectories reconstructed in \(m\)-dimensional space:
\begin{equation}
    F(m) = \left( \frac{1}{N-m+1} \sum_{i=1}^{N-m+1} \log C_i(m, r) \right)
\end{equation}

where \(C_i(m, r)\) is the count of points within a distance \(r\) from point \(i\) in \(m\)-dimensional space, and \(N\) is the length of the time series.

\textbf{Identifying the Saturation Point}

As \(m\) increases, \(F(m)\) typically increases and reaches a saturation point. Choose the smallest \(m\) for which \(F(m)\) does not significantly increase, indicating that increasing the dimension does not reveal more information about the dynamics.

\section{Experiments Details}
\label{exp details}
\subsection{Datasets}
\label{setting}
Our experiments are carried out on Five real-world datasets and two chaos datasets as described below:
\begin{itemize}[leftmargin=*]
    \item \textbf{ETT\footnote{
https://github.com/zhouhaoyi/ETDataset}} dataset are procured from two electricity substations over two years. They provide a summary of load and oil temperature data across seven variables. For ETTm1 and ETTm2, the "m" signifies that data was recorded every 15 minutes, yielding 69,680 time steps. ETTh1 and ETTh2 represent the hourly equivalents of ETTm1 and ETTm2, each containing 17,420 time steps.
    \item \textbf{Exchange\footnote{
https://github.com/laiguokun/multivariate-time-series-data}} dataset details the daily foreign exchange rates of eight countries,
including Australia, British, Canada, Switzerland, China, Japan, New Zealand, and Singapore from 1990 to 2016.
    \item \textbf{Weather\footnote{
https://www.bgc-jena.mpg.de/wetter}} dataset is a meteorological collection featuring 21 variates, gathered in the United States over a four-year span.
    \item \textbf{Lorenz96} dataset is an artificial dataset. We simulate the data of 30,000 time steps with an initial dimension of 40d according to the Lorenz96 equation and map to 3D by a randomly initialized Linear network to simulate the realistic chaotic time series generated from an unknown underlying chaotic system. More details are in Appendix \ref{simulation lorenz96}.
    \item \textbf{Crypots} comprises historical transaction data for various cryptocurrencies, including Bitcoin and Ethereum. We select samples with $Asset\_ID$ set to 0, and remove the column $Count$. We download the data from \url{https://www.kaggle.com/competitions/g-research-crypto-forecasting/data?select=supplemental_train.csv}.
    \item \textbf{Air Convection\footnote{https://www.psl.noaa.gov/}} dataset is obtained by scraping the data from NOAA, and it includes the data for the entire year of 2023. The dataset consists of 20 variables, including air humidity, pressure, convection characteristics, and others. The data was sampled at intervals of 15 minutes and averaged over the course of the entire year.
    \item \textbf{Lorenz96-3d}: See Appendix \ref{simulation lorenz96}.

\end{itemize} 

\subsection{Baselines}
Our baseline models include:
\textbf{Mamba4TS:} Mamba4TS is a novel SSM architecture tailored for TSF tasks, featuring a parallel scan (\href{https://github.com/alxndrTL/mamba.py/tree/main}{https://github.com/alxndrTL/mamba.py/tree/main}). Additionally, this model adopts a patching operation with both patch length and stride set to 16. We use the recommended configuration as our experimental settings with a batch size of 32, and the learning rate is 0.0001.

\textbf{S-Mamba~\cite{wang2024mamba}:} S-Mamba utilizes a linear tokenization of variates and a bidirectional Mamba layer to efficiently capture inter-variate correlations and temporal dependencies. This approach underscores its potential as a scalable alternative to Transformer technologies in TSF. We download the source code from: \href{https://github.com/wzhwzhwzh0921/S-D-Mamba}{https://github.com/wzhwzhwzh0921/S-D-Mamba} and adopt the recommended setting as its experimental configuration.

\textbf{RWKV-TS~\cite{hou2024rwkv}:} RWKV-TS is an innovative RNN-based architecture for TSF that offers linear time and memory efficiency. We download the source code from: \href{https://github.com/howard-hou/RWKV-TS}{https://github.com/howard-hou/RWKV-TS}. We follow the recommended settings as experimental configuration.

\textbf{Koopa~\cite{liu2023koopa}:} Koopa is a novel forecasting model that tackles non-stationary time series using Koopman theory to differentiate time-variant dynamics. It features a Fourier Filter and Koopman Predictors within a stackable block architecture, optimizing hierarchical dynamics learning. We download the source code from: \href{https://github.com/thuml/Koopa}{https://github.com/thuml/Koopa}. We set the lookback window to fixed values of \{96, 192, 336, 720\} instead of twice the output length as in the original experimental settings.

\textbf{iTransformer~\cite{liu2023itransformer}:} iTransformer modifies traditional Transformer models for time series forecasting by inverting dimensions and applying attention and feed-forward networks across variate tokens. We download the source code from: \href{https://github.com/thuml/iTransformer}{https://github.com/thuml/iTransformer}. We follow the recommended settings as experimental configuration.

\textbf{PatchTST~\cite{nie2022time}:} PatchTST introduces a novel design for Transformer-based models tailored to time series forecasting. It incorporates two essential components: patching and channel-independent structure. We obtain the source code from: \href{https://github.com/PatchTST}{https://github.com/PatchTST}. This code serves as our baseline for long-term forecasting, and we follow the recommended settings for our experiments.

\textbf{LTSF-Linear~\cite{zeng2023transformers}:} In LTSF-Linear family, DLinear decomposes raw data into trend and seasonal components and NLinear is just a single linear models to capture temporal relationships between input and output sequences. We obtain the source code from: \href{https://github.com/cure-lab/LTSF-Linear}{https://github.com/cure-lab/LTSF-Linear}, using it as our long-term forecasting baseline and adhering to recommended settings for experimental configuration.

\textbf{GPT4TS~\cite{zhou2023one}:} This study explores the application of pre-trained language models to time series analysis tasks, demonstrating that the Frozen Pretrained Transformer (FPT), without modifications to its core architecture, achieves state-of-the-art results across various tasks. We download the source code from: \href{https://github.com/DAMO-DI-ML/NeurIPS2023-One-Fits-All}{https://github.com/DAMO-DI-ML/NeurIPS2023-One-Fits-All}. We follow the recommended settings as experimental configuration.

\subsection{Experiments Setting}
All experiments are conducted on the NVIDIA RTX3090-24G and A6000-48G GPUs. The Adam optimizer is chosen. A grid search is performed to determine the optimal hyperparameters, including the learning rate from \{0.0001, 0.0005, 0.001\}, model layer from \{1, 2\} (typically 1), polynomial order from \{32, 64, 256\}, patch length from \{8, 16\}, projection level based on $logL$ and no more than 3, primary modes is 16, and the input length is 96. Due to the sensitivity of the CC method to numerical calculations, we take the average result from three iterations of the calculation.

\section{Supplemental Experiments}
\label{Supplemental Experiments}

\subsection{Dynamic Structures of Real-world Data}
\label{more psr fig}
\begin{figure*}[h!]
\centering
{\includegraphics[width=1\columnwidth]{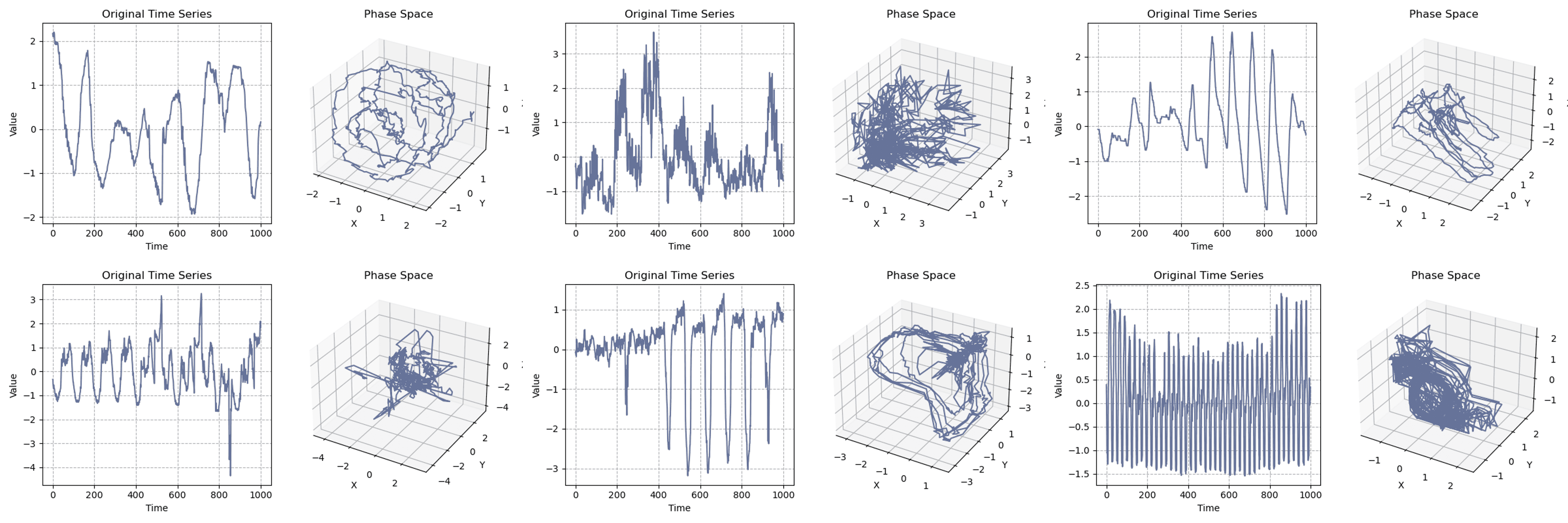}}
\vspace{-4mm}
\caption{Dynamic structures of real-world data}
\label{fig:PSRs}
\vspace{-3mm}
\end{figure*}
As shown in Figure \ref{fig:PSRs}, we present the phase space structures of various real-time series. Please note that due to our visualization limitations in three dimensions, the shapes of many attractors for these time series are manifested in higher dimensions. We can only display slices of the attractors in the first three dimensions.

\subsection{The Chaotic Nature of Datasets}
\label{F.1}
\textbf{Lyaponuv Exponent }



Lyapunov exponents are a set of quantities that characterize the rate of separation of infinitesimally close trajectories in a dynamical system. They play a crucial role in understanding the stability and chaotic behavior of the system~\cite{kinsner2006characterizing}. The formal definition of the Lyapunov exponent for a trajectory of a dynamical system is given by:

\begin{equation}
\lambda = \lim_{t \to \infty} \frac{1}{t} \ln \left( \frac{\| \delta X_i(t) \|}{\| \delta X_i(0) \|} \right),
\end{equation}

where $\delta \mathbf{X}(t)$ is the deviation vector at time $t$, and $\lambda$ is the Lyapunov exponent. The set of Lyapunov exponents for a system provides a spectrum, indicating the behavior of trajectories in each dimension of the system's phase space.

When the direction of the initial separation vector is different, the separation rate is also different. Thus, the spectrum of Lyapunov exponents exists, which have the same number of dimensions as the phase space. The largest of these is often referred to as the Maximal Lyapunov exponent (MLE). We reconstruct dimension $m$ according to different phase spaces to get the maximum Lyapunov index. A positive MLE is often considered an indicator of chaotic behavior in the system, while a negative value indicates convergence, implying stability in the system's behavior.

As shown in Table \ref{max leyak}, we calculate the maximum Lyapunov exponent for all mainstream LTSF datasets. Surprisingly, we find that their MLEs are all positive, indicating the presence of chaos to varying degrees in these datasets. This directly supports the motivation proposed by Attraos. Among them, the weather dataset exhibits the strongest chaotic behavior. Note that the existence of at least one positive Lyapunov exponent is sufficient to determine the presence of chaos. For example, the classical Lorenz63 system has three Lyapunov exponents with values negative, zero, and positive respectively.
\begin{remark}
    For multivariate time series, we take the average.
\end{remark}


\vspace{-3mm}
\begin{table}[h]
\centering
\caption{Maximal Lyapunov Exponents for Various Datasets in Multivariate Long-Term Forecasting}
\setlength{\tabcolsep}{12pt}
\resizebox{1\columnwidth}{!}{
\begin{tabular}{ccccccccc}
\toprule[1.5pt]
\large{Dataset} & ETTh1 & ETTh2 & ETTm1 & ETTm2 & Exchange & Weather & Electricity & Traffic \\
\midrule
MLE & 0.064437 & 0.059833 & 0.071673 & 0.082791  & 0.039670 & 0.242649 & 0.014613 & 0.189311 \\
\bottomrule[1.5pt]
\end{tabular}}
\vspace{-3mm}
\label{max leyak}
\end{table}

\subsection{Simulation for Lorenz96 (Case Study)}
\label{simulation lorenz96}

\textbf{Lorenz63 System}

The Lorenz63 system, introduced by Edward Lorenz in 1963, is a simplified mathematical model for atmospheric convection. The model is a system of three ordinary differential equations now known as the Lorenz equations:
\begin{align}
\frac{dx}{dt} &= \sigma(y - x), \\
\frac{dy}{dt} &= x(\rho - z) - y, \\
\frac{dz}{dt} &= xy - \beta z.
\end{align}
Here, $x$, $y$, and $z$ make up the state of the system. The parameters $\sigma$, $\rho$, and $\beta$ represent the Prandtl number, Rayleigh number, and certain physical dimensions of the layer, respectively. Lorenz derived these equations to model the way air moves around in the atmosphere. This model is famous for exhibiting chaotic behavior for certain parameter values and initial conditions.

\textbf{Lorenz96 System}

The Lorenz96 system is a mathematical model that was introduced by Edward N. Lorenz in 1996 as an extension of the original Lorenz model. It is commonly used to study chaotic dynamics in systems with multiple interacting variables. 

The Lorenz96 system consists of a set of ordinary differential equations that describe the time evolution of a set of variables. In its simplest form, the system is defined as follows:

\[
\frac{{dx_i}}{{dt}} = (x_{i+1} - x_{i-2})x_{i-1} - x_i + F
\]

Here, $x_i$ represents the state variable at position $i$, and $F$ is a forcing term that controls the overall behavior of the system. The nonlinear term $(x_{i+1} - x_{i-2})x_{i-1}$ captures the interactions between neighboring variables, leading to the emergence of chaotic behavior. The Lorenz96 system exhibits a range of fascinating phenomena, including intermittent chaos, phase transitions, and the presence of multiple stable and unstable regimes. Its dynamics have been extensively studied to gain insights into the behavior of complex systems and to explore the limits of predictability.

\vspace{-2mm}
\begin{figure*}[h]
\centering
{\includegraphics[width=1\columnwidth]{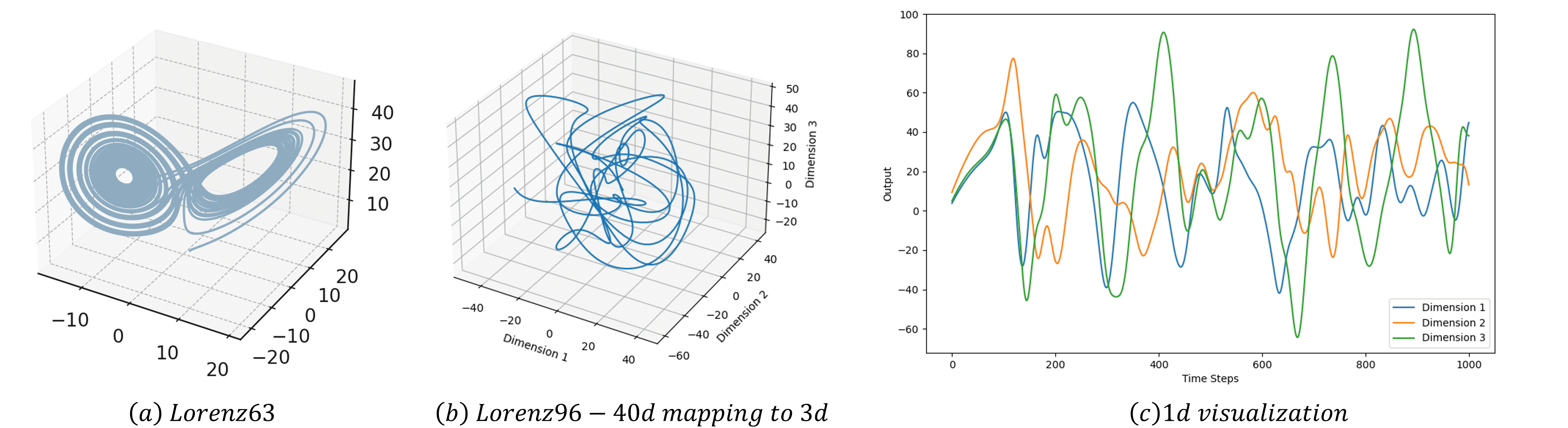}}
\vspace{-4mm}
\caption{Simulation for Lorenz96}
\label{fig:lorenz96}
\vspace{-3mm}
\end{figure*}

To simulate time series generated from an unknown chaotic system, we first construct a 40-dimensional Lorenz96 system to represent the underlying chaotic system. We then use a randomly initialized linear neural network to simulate the observation function $h$. Through $h$, we map the Lorenz96-40d system, which resides in the manifold space, to a 3-dimensional Euclidean space to obtain a multivariate time series dataset with three variables. As shown in the middle of Figure \ref{fig:lorenz96}, in the observation domain, the dynamical system structure of Lorenz96-40d is difficult to discern specific shapes, so we need to reconstruct it to 40 dimensions using the Phase Space Reconstruction (PSR) technique to study its dynamic characteristics in a topologically equivalent structure. It is worth noting that on the right side of Figure \ref{fig:lorenz96}, we visualize the last variable of the system and find striking similarities with real-world time series, even exhibiting some periodic behaviors. This further supports our hypothesis that real-world time series are generated by underlying chaotic systems.

\subsection{Chaotic Modulation}
\label{sec:modulation}
\begin{figure}[!h]
\centering
{\includegraphics[width=.4\columnwidth]{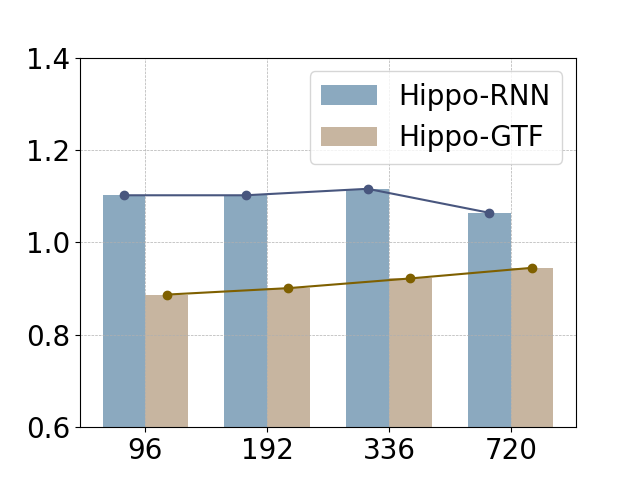}}
{\includegraphics[width=.4\columnwidth]{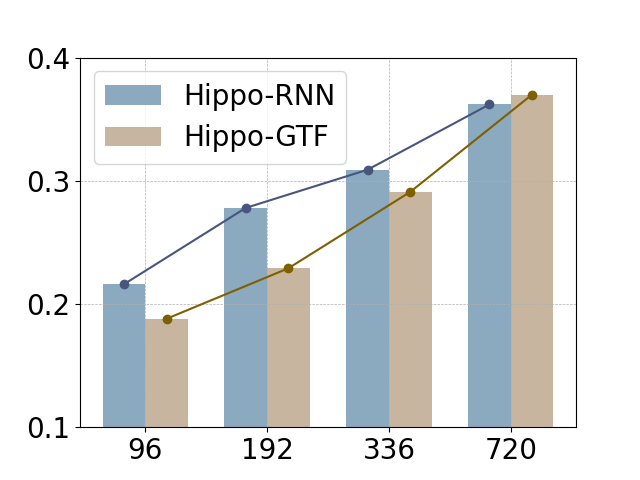}}
\caption{Performance comparison about teaching forcing, measured by MSE. Left: Lorenzo96 dataset. Right: Weather dataset.}
\label{modulation}
\end{figure}
In LTSF tasks, it is commonly observed that predictive performance deteriorates as the length of the forecasting window increases. This phenomenon bears a striking resemblance to the inherent challenge in long-term predictions of chaotic dynamical systems, which are highly sensitive to initial conditions. Recent studies \cite{hess2023generalized} in the field of chaotic dynamical systems have highlighted that, to address the issue of gradient divergence caused by positive maximal Lyapunov exponents indicative of chaos, the implementation of teaching forcing as a method to incrementally constrain the trajectory of the dynamical system presents a straightforward yet effective framework. To enhance Hippo-RNN, we have implemented the following modifications:
$\tilde{\boldsymbol{z}}_t:=(1-\alpha) \boldsymbol{z}_t+\alpha \overline{\boldsymbol{z}}_t, ~\boldsymbol{z}_t=\boldsymbol{F}_{\boldsymbol{\theta}}\left(\tilde{\boldsymbol{z}}_{t-1}\right),~\text{with} ~0 \leq \alpha \leq 1$, 
where we intervene the evolution state $\boldsymbol{z}$ by utilizing the ground truth hidden state $\overline{\boldsymbol{z}}$. As shown in Figure \ref{modulation}, this modification leads to notable improvements in both mainstream LTSF datasets and real-world chaotic datasets. However, the autoregressive nature of the teaching forcing methods introduces additional computational overhead and may potentially reduce its generalization capabilities, which presents a challenge for integrating it into sequence-mapping models.




\subsection{Full Results}
\label{full results}
As shown in Table \ref{overall}, we showcase the full experiment results for all mainstream LTSF datasets.
\begin{table*}[ht!]
\centering
\caption{Multivariate long-term series forecasting results in mainstream datasets with input length is 96 and prediction horizons are \{96, 192, 336, 720\}. The best model is in boldface, and the second best is underlined. The bottom part introduces the variable Kernel for multivariate variables.}
\label{overall}
\setlength{\tabcolsep}{10pt}
\begin{adjustbox}{width=1.2\columnwidth, center}
\resizebox{1\columnwidth}{!}{
\begin{tabular}{p{0.3cm}|c|cc|cc|cc||cc|cc|cc|cc|cc|cc}
\toprule[1.5pt]
\multicolumn{2}{c}{\multirow{2}{*}{\large{Model}}}&\multicolumn{2}{c}{\cellcolor{mycolor4}{Attraos}}&\multicolumn{2}{c}{\cellcolor{mycolor4}Mamba4TS}&\multicolumn{2}{c}{\cellcolor{mycolor4}S-Mamba}&\multicolumn{2}{c}{\cellcolor{mycolor4}RWKV-TS} &\multicolumn{2}{c}{\cellcolor{mycolor4}GPT4TS}&\multicolumn{2}{c}{\cellcolor{mycolor4}Koopa}&\multicolumn{2}{c}{\cellcolor{mycolor4}InvTrm}&\multicolumn{2}{c}{\cellcolor{mycolor4}PatchTST}&\multicolumn{2}{c}{\cellcolor{mycolor4}DLinear}\\
\multicolumn{2}{c}{} & \multicolumn{2}{c}{\cellcolor{mycolor4}{(Ours)}} & \multicolumn{2}{c}{\cellcolor{mycolor4}(Temporal Emb.)}&\multicolumn{2}{c}{\cellcolor{mycolor4}(Channel Emb.\cite{wang2024mamba})} & \multicolumn{2}{c}{\cellcolor{mycolor4}\cite{hou2024rwkv}} & \multicolumn{2}{c}{\cellcolor{mycolor4}\cite{zhou2023one}} & \multicolumn{2}{c}{\cellcolor{mycolor4}\cite{liu2023koopa}} & \multicolumn{2}{c}{\cellcolor{mycolor4}\cite{liu2023itransformer}} & \multicolumn{2}{c}{\cellcolor{mycolor4}\cite{nie2022time}} & \multicolumn{2}{c}{\cellcolor{mycolor4}\cite{zeng2023transformers}} \\
\cmidrule(l){1-2}\cmidrule(l){3-4}\cmidrule(l){5-6}\cmidrule(l){7-8}\cmidrule(l){9-10}\cmidrule(l){11-12}\cmidrule(l){13-14}\cmidrule(l){15-16}\cmidrule(l){17-18}\cmidrule(l){19-20}
\multicolumn{2}{c}{Metric}&MSE & MAE & MSE &MAE & MSE &MAE & MSE & MAE & MSE & MAE & MSE & MAE & MSE & MAE & MSE & MAE & MSE & MAE \\
\midrule
\midrule
\multirow{5}{*}{\begin{sideways}ETTh1\end{sideways}} 
& 96  & \textbf{\color{red}0.370} & \textbf{\color{red}0.388} & 0.386 & 0.400 & 0.388 & 0.407 & 0.383 & 0.401 & 0.398 & 0.424 & 0.385 & 0.408 & 0.393 & 0.409 & \textbf{\color{blue}0.375} & \textbf{\color{blue}0.396} & 0.396 & 0.410 \\
& 192 & \textbf{\color{red}0.416} & \textbf{\color{red}0.418} & \textbf{\color{blue}0.426} & 0.430 & 0.443 & 0.439 & 0.441 & 0.431 & 0.441 & 0.436 & 0.441 & 0.431 & 0.448 & 0.442 & 0.429 & \textbf{\color{blue}0.426} & 0.449 & 0.444 \\
& 336 & \textbf{\color{red}0.458} & \textbf{\color{red}0.432} & 0.484 & 0.451 & 0.492 & 0.467 & 0.493 & 0.465 & 0.492 & 0.466 & 0.474 & 0.454 & 0.491 & 0.465 & \textbf{\color{blue}0.461} & \textbf{\color{blue}0.448} & 0.487 & 0.465 \\
& 720 & \textbf{\color{red}0.447} & \textbf{\color{red}0.442} & 0.481 & 0.472 & 0.511 & 0.499 & 0.501 & 0.487 & 0.487 & 0.483 & 0.501 & 0.480 & 0.518 & 0.501 & \textbf{\color{blue}0.469} & \textbf{\color{blue}0.471} & 0.515 & 0.512 \\
& AVG & \textbf{\color{red}0.423} & \textbf{\color{red}0.420} & 0.444 & 0.438 & 0.459 & 0.453 & 0.454 & 0.446 & 0.457 & 0.450 & 0.450 & 0.443 & 0.463 & 0.454 & \textbf{\color{blue}0.434} & \textbf{\color{blue}0.435} & 0.462 & 0.458 \\
\midrule
\multirow{5}{*}{\begin{sideways}ETTh2\end{sideways}} 
& 96  & \textbf{\color{blue}0.292} & 0.348 & 0.297 & 0.347 & 0.296 &0.347 & \textbf{\color{red}0.290} & \textbf{\color{red}0.342} & 0.312 & 0.360 & 0.317 & 0.359 & 0.302 & 0.351 & 0.295 & \textbf{\color{blue}0.344} & 0.353 & 0.405 \\
& 192 & \textbf{\color{blue}{0.374}} & \textbf{\color{red}0.386} & 0.392 & 0.409 & 0.377 & 0.398 & \textbf{\color{red}0.372} & \textbf{\color{blue}0.393} & 0.387 & 0.405 & 0.375 & 0.399 & 0.379 & 0.399 &0.375 & 0.399 & 0.482 & 0.479 \\
& 336 & \textbf{\color{blue}0.420} & 0.432 & 0.424 & 0.436 & 0.425 & 0.435& \textbf{\color{red}0.417} & \textbf{\color{blue}0.431} & 0.424 & 0.437 & 0.436 & 0.446 & 0.423 & 0.432 & \textbf{\color{blue}0.420} & \textbf{\color{red}0.429} & 0.588 & 0.539 \\
& 720 & \textbf{\color{red}0.418} & \textbf{\color{red}0.431} & 0.431 & 0.448 & 0.427 & 0.446 & \textbf{\color{blue}0.421} & \textbf{\color{blue}0.442} & 0.433 & 0.453 & 0.460 & 0.463 & 0.429 & 0.447 &0.431 & 0.451 & 0.833 & 0.658 \\
& AVG & \textbf{\color{blue}0.376} & \textbf{\color{red}0.399} & 0.386 & 0.410 & 0.381 & 0.407 & \textbf{\color{red}0.375} & \textbf{\color{blue}0.402} & 0.389 & 0.414 & 0.397 & 0.417 & 0.383 & 0.407 & 0.380 & 0.406 & 0.564 & 0.520 \\
\midrule
\multirow{5}{*}{\begin{sideways}ETTm1\end{sideways}} 
& 96  & \textbf{\color{red}0.321} & \textbf{\color{blue}0.362} & 0.331 & 0.368 & 0.332 & 0.368 & 0.328 & 0.366 & 0.335 & 0.369 & \textbf{\color{blue}0.322} & \textbf{\color{red}0.360} & 0.343 & 0.377 & 0.326 & 0.365 & 0.345 & 0.372 \\
& 192 & \textbf{\color{red}0.365} & \textbf{\color{red}0.373} & 0.376 & 0.391 & 0.378 & 0.393 & 0.372 & 0.389 & 0.374 & 0.385 & 0.378 & 0.393 & 0.379 & 0.394 & \textbf{\color{red}0.361} & \textbf{\color{blue}0.383} & 0.382 & 0.391 \\
& 336 & \textbf{\color{red}0.390} & \textbf{\color{red}0.395} & 0.406 & 0.413 & 0.409 &0.414 & 0.401 & 0.409 & 0.407 & 0.406 & 0.405 & 0.413 & 0.418 & 0.418 & \textbf{\color{blue}0.396} & \textbf{\color{blue}0.405} & 0.413 & 0.413 \\
& 720 & \textbf{\color{red}0.451} & \textbf{\color{red}0.432} & 0.469 & 0.452  & 0.476 & 0.453 & 0.462& 0.446 & 0.469 & 0.442 & 0.473 & 0.447 & 0.488 & 0.458 & \textbf{\color{blue}0.458} & \textbf{\color{blue}0.439} & 0.472 & 0.450 \\
& AVG & \textbf{\color{red}0.382} & \textbf{\color{red}0.391} & 0.396 & 0.406 & 0.399 & 0.407 &\textbf{\color{blue}0.391} & 0.403 & 0.396 & 0.401 & 0.395 & 0.403 & 0.407 & 0.412 & 0.403 & \textbf{\color{blue}0.398} & 0.403 & 0.406 \\
\midrule
\multirow{5}{*}{\begin{sideways}ETTm2\end{sideways}} 
& 96  & \textbf{\color{red}0.172} & \textbf{\color{red}0.254} & 0.186 & 0.268 & 0.182 &0.267 & 0.181 &0.264  & 0.190 & 0.275 & 0.180 & 0.261 & 0.184 & 0.269 & \textbf{\color{blue}0.177} & \textbf{\color{blue}0.260} & 0.192 & 0.291 \\
& 192 & \textbf{\color{red}0.242} & \textbf{\color{red}0.301} & 0.261 & 0.320 &0.248 &0.309 & 0.245 & 0.307 & 0.253 & 0.313 & \textbf{\color{blue}0.244} & \textbf{\color{blue}0.304} & 0.253 & 0.313 & 0.246 & 0.308 & 0.284 & 0.360 \\
& 336 & 0.303 & \textbf{\color{red}0.340} & 0.331 & 0.366 & 0.312 &0.350 &0.306 & 0.344 & 0.321 & 0.360 & \textbf{\color{red}0.300} & \textbf{\color{blue}0.340} & 0.313 & 0.351 & \textbf{\color{blue}0.302} & 0.343 & 0.371 & 0.420 \\
& 720 & \textbf{\color{blue}0.401} & \textbf{\color{red}0.399} & 0.418 & 0.416 & 0.412 & 0.406 & 0.406 & 0.406 & 0.411 & 0.406 & \textbf{\color{red}0.398} &\textbf{\color{blue}0.400} & 0.413 & 0.406 & 0.407 & 0.405 & 0.532 & 0.511 \\
& AVG & \textbf{\color{red}0.280} & \textbf{\color{red}0.324} & 0.299 & 0.343 & 0.289 & 0.333 & 0.285 & 0.330 & 0.294 & 0.339 & \textbf{\color{blue}0.281} & \textbf{\color{blue}0.326} & 0.291 & 0.335 & 0.283 & 0.329 & 0.345 & 0.396 \\
\midrule
\multirow{5}{*}{\begin{sideways}Exchange\end{sideways}} 
& 96  & \textbf{\color{red}0.082} & \textbf{\color{red}0.200} & \textbf{\color{blue}0.086} & \textbf{\color{blue}0.205} & 0.087 & 0.209 &0.129 & 0.256 & 0.091 & 0.212 & 0.093 & 0.215 & 0.097 & 0.222 & 0.093 & 0.212 & 0.094 & 0.227 \\
& 192 & \textbf{\color{red}0.171} & \textbf{\color{red}0.294} & \textbf{\color{blue}0.173} & \textbf{\color{blue}0.297} & 0.180 & 0.303 &0.231 & 0.346 & 0.183 & 0.304 & 0.189 & 0.313 & 0.184 & 0.309 & 0.201 & 0.319 & 0.185 & 0.325 \\
& 336 & 0.331 &0.415 & 0.340 & 0.423 & 0.330 & \textbf{\color{blue}0.417} &0.380 & 0.448 & \textbf{\color{blue}0.328} & \textbf{\color{blue}0.417} & 0.371 & 0.443 & \textbf{\color{red}0.327} & \textbf{\color{red}0.416} & 0.338 & 0.422 & 0.330 & 0.437 \\
& 720 & \textbf{\color{blue}0.810} & \textbf{\color{red}0.669} & 0.855 & 0.696 & 0.860 & 0.700& 0.883 & 0.704 & 0.880 & 0.704 & 0.908 & 0.726 & 0.885 & 0.715 & 0.900 & 0.711 & \textbf{\color{red}0.774} & \textbf{\color{blue}0.673} \\
& AVG & \textbf{\color{blue}0.349} & \textbf{\color{red}0.395} & 0.364 & \textbf{\color{blue}0.405} & 0.364 & 0.407 & 0.406 & 0.439 & 0.371 & 0.409 &  0.390 & 0.424 & 0.366 & 0.416 & 0.383 & 0.416 & \textbf{\color{red}0.346} & 0.416 \\
\midrule
\multirow{5}{*}{\begin{sideways}Crypto\end{sideways}} 
& 96  & \textbf{\color{red}0.174} & 0.143 & 0.179 & 0.143 & 0.187 & 0.147 & \textbf{\color{blue}0.176} & \textbf{\color{red}0.139} & 0.183 & 0.144 & 0.181 & 0.143 & 0.183 & 0.144 & 0.177 & \textbf{\color{blue}0.141} & 0.183 & 0.155 \\
& 192 & \textbf{\color{red}0.182} & \textbf{\color{red}0.149} & 0.188 & 0.154 & 0.191 & 0.153 & \textbf{\color{blue}0.186} & \textbf{\color{blue}0.151} & 0.189 & 0.155 & 0.191 & 0.153 & 0.190 & 0.155 & 0.188 & 0.152 & 0.195 & 0.169 \\
& 336 & \textbf{\color{red}0.191} & \textbf{\color{red}0.158} & 0.197 & 0.166 & 0.197 & 0.163 & \textbf{\color{blue}0.192} & \textbf{\color{blue}0.162} & 0.201 & 0.168 & 0.208 & 0.173 & 0.199 & 0.167 & 0.195 & 0.164 & 0.206 & 0.180 \\
& 720 & \textbf{\color{red}0.201} & \textbf{\color{red}0.179} & 0.207 & 0.186 & 0.216 & 0.190 & \textbf{\color{blue}0.205} & \textbf{\color{blue}0.184} & 0.210 & 0.187 & 0.215 & 0.189 & 0.212 & 0.189 & 0.208 & 0.185 & 0.219 & 0.201\\
& AVG & \textbf{\color{red}0.187} & \textbf{\color{red}0.157} & 0.193 & 0.162 & 0.198 & 0.163 & \textbf{\color{blue}0.190} & \textbf{\color{blue}0.159} & 0.196 & 0.164 & 0.199 & 0.165 & 0.196 & 0.164 & 0.192 & 0.161 & 0.201 & 0.176\\
\midrule
\multirow{5}{*}{\begin{sideways}Weather\end{sideways}} 
& 96  & \textbf{\color{blue}0.159} & \textbf{\color{blue}0.206} & 0.175 & 0.215 & 0.165 & 0.208 & 0.175 & 0.217 & 0.203 & 0.244 & \textbf{\color{red}0.158} & \textbf{\color{red}0.203} & 0.175 & 0.216 & 0.176 & 0.217 & 0.197 & 0.259 \\
& 192 & \textbf{\color{blue}0.212} & \textbf{\color{red}0.249} & 0.223 & 0.257 & 0.215 & 0.254 &0.219 & 0.256 & 0.247 & 0.277 & \textbf{\color{red}0.211} & \textbf{\color{blue}0.252} & 0.225 & 0.257 & 0.223 & 0.257 & 0.238 & 0.299 \\
& 336 & \textbf{\color{red}0.265} & \textbf{\color{red}0.288} & 0.278 & 0.297 & 0.273 & 0.297 &0.275 & 0.298 & 0.297 & 0.311 & \textbf{\color{blue}0.267} & \textbf{\color{blue}0.292} & 0.280 & 0.298 & 0.277 & 0.296 & 0.282 & 0.331 \\
& 720 & \textbf{\color{red}0.347} & \textbf{\color{red}0.340} & 0.355 & 0.349 & 0.354 &0.349 &0.353 & 0.349& \textbf{\color{blue}0.368} & \textbf{\color{blue}0.356} & 0.351 & 0.346 & 0.358 & 0.350 & 0.354 & 0.348 & 0.350 & 0.388 \\
& AVG & \textbf{\color{red}0.246} & \textbf{\color{red}0.271} & 0.258 & 0.280 & 0.252 & 0.277 & 0.256 & 0.280 & 0.279 & 0.279 & \textbf{\color{blue}0.247}& \textbf{\color{blue}0.273} & 0.260 & 0.280 & 0.258 & 0.280 & 0.267 & 0.319 \\
\midrule
\midrule
\multicolumn{2}{c|}{$1^{st}/2^{st}$\text{Count}}& \textbf{\color{red}33} & \textbf{\color{red}31} &{3} & {3} & 0 & 1 & \textbf{\color{blue}11} & 10 & {2} & {2} & 9 & 9 & 1 & 1 & 10 & \textbf{\color{blue}13} & 2 & 1 \\
\bottomrule[1.5pt]
\end{tabular}}
\end{adjustbox}
\end{table*}

\clearpage
\newpage
\section*{NeurIPS Paper Checklist}
\begin{enumerate}

\item {\bf Claims}
    \item[] Question: Do the main claims made in the abstract and introduction accurately reflect the paper's contributions and scope?
    \item[] Answer: \answerYes{}.
    \item[] Justification: Attraos, considers time series as generated by a generalized chaotic dynamic system. By leveraging the invariance of attractors, Attraos enhances time series prediction performance and provides empirical and theoretical explainations. 
    \item[] Guidelines:
    \begin{itemize}
        \item The answer NA means that the abstract and introduction do not include the claims made in the paper.
        \item The abstract and/or introduction should clearly state the claims made, including the contributions made in the paper and important assumptions and limitations. A No or NA answer to this question will not be perceived well by the reviewers. 
        \item The claims made should match theoretical and experimental results, and reflect how much the results can be expected to generalize to other settings. 
        \item It is fine to include aspirational goals as motivation as long as it is clear that these goals are not attained by the paper. 
    \end{itemize}

\item {\bf Limitations}
    \item[] Question: Does the paper discuss the limitations of the work performed by the authors?
    \item[] Answer: \answerYes{}.
    \item[] Justification: In Section \ref{limitation}, we discuss the limitations of this work.
    \item[] Guidelines:
    \begin{itemize}
        \item The answer NA means that the paper has no limitation while the answer No means that the paper has limitations, but those are not discussed in the paper. 
        \item The authors are encouraged to create a separate "Limitations" section in their paper.
        \item The paper should point out any strong assumptions and how robust the results are to violations of these assumptions (e.g., independence assumptions, noiseless settings, model well-specification, asymptotic approximations only holding locally). The authors should reflect on how these assumptions might be violated in practice and what the implications would be.
        \item The authors should reflect on the scope of the claims made, e.g., if the approach was only tested on a few datasets or with a few runs. In general, empirical results often depend on implicit assumptions, which should be articulated.
        \item The authors should reflect on the factors that influence the performance of the approach. For example, a facial recognition algorithm may perform poorly when image resolution is low or images are taken in low lighting. Or a speech-to-text system might not be used reliably to provide closed captions for online lectures because it fails to handle technical jargon.
        \item The authors should discuss the computational efficiency of the proposed algorithms and how they scale with dataset size.
        \item If applicable, the authors should discuss possible limitations of their approach to address problems of privacy and fairness.
        \item While the authors might fear that complete honesty about limitations might be used by reviewers as grounds for rejection, a worse outcome might be that reviewers discover limitations that aren't acknowledged in the paper. The authors should use their best judgment and recognize that individual actions in favor of transparency play an important role in developing norms that preserve the integrity of the community. Reviewers will be specifically instructed to not penalize honesty concerning limitations.
    \end{itemize}

\item {\bf Theory Assumptions and Proofs}
    \item[] Question: For each theoretical result, does the paper provide the full set of assumptions and a complete (and correct) proof?
    \item[] Answer: \answerYes{}.
    \item[] Justification: All the proofs in this paper can be found in Appendix \ref{proofs}.
    \item[] Guidelines:
    \begin{itemize}
        \item The answer NA means that the paper does not include theoretical results. 
        \item All the theorems, formulas, and proofs in the paper should be numbered and cross-referenced.
        \item All assumptions should be clearly stated or referenced in the statement of any theorems.
        \item The proofs can either appear in the main paper or the supplemental material, but if they appear in the supplemental material, the authors are encouraged to provide a short proof sketch to provide intuition. 
        \item Inversely, any informal proof provided in the core of the paper should be complemented by formal proofs provided in appendix or supplemental material.
        \item Theorems and Lemmas that the proof relies upon should be properly referenced. 
    \end{itemize}

    \item {\bf Experimental Result Reproducibility}
    \item[] Question: Does the paper fully disclose all the information needed to reproduce the main experimental results of the paper to the extent that it affects the main claims and/or conclusions of the paper (regardless of whether the code and data are provided or not)?
    \item[] Answer: \answerYes{}.
    \item[] Justification: We have open-sourced our code in \url{https://anonymous.4open.science/r/Attraos-40C2/} and provided detailed experimental settings in Appendix \ref{exp details} to facilitate reproducibility. 
    \item[] Guidelines:
    \begin{itemize}
        \item The answer NA means that the paper does not include experiments.
        \item If the paper includes experiments, a No answer to this question will not be perceived well by the reviewers: Making the paper reproducible is important, regardless of whether the code and data are provided or not.
        \item If the contribution is a dataset and/or model, the authors should describe the steps taken to make their results reproducible or verifiable. 
        \item Depending on the contribution, reproducibility can be accomplished in various ways. For example, if the contribution is a novel architecture, describing the architecture fully might suffice, or if the contribution is a specific model and empirical evaluation, it may be necessary to either make it possible for others to replicate the model with the same dataset, or provide access to the model. In general. releasing code and data is often one good way to accomplish this, but reproducibility can also be provided via detailed instructions for how to replicate the results, access to a hosted model (e.g., in the case of a large language model), releasing of a model checkpoint, or other means that are appropriate to the research performed.
        \item While NeurIPS does not require releasing code, the conference does require all submissions to provide some reasonable avenue for reproducibility, which may depend on the nature of the contribution. For example
        \begin{enumerate}
            \item If the contribution is primarily a new algorithm, the paper should make it clear how to reproduce that algorithm.
            \item If the contribution is primarily a new model architecture, the paper should describe the architecture clearly and fully.
            \item If the contribution is a new model (e.g., a large language model), then there should either be a way to access this model for reproducing the results or a way to reproduce the model (e.g., with an open-source dataset or instructions for how to construct the dataset).
            \item We recognize that reproducibility may be tricky in some cases, in which case authors are welcome to describe the particular way they provide for reproducibility. In the case of closed-source models, it may be that access to the model is limited in some way (e.g., to registered users), but it should be possible for other researchers to have some path to reproducing or verifying the results.
        \end{enumerate}
    \end{itemize}

\item {\bf Open access to data and code}
    \item[] Question: Does the paper provide open access to the data and code, with sufficient instructions to faithfully reproduce the main experimental results, as described in supplemental material?
    \item[] Answer: \answerYes{}.
    \item[] Justification: We have open-sourced our code and experimental settings in \url{https://anonymous.4open.science/r/Attraos-40C2/} to facilitate reproducibility.
    \item[] Guidelines:
    \begin{itemize}
        \item The answer NA means that paper does not include experiments requiring code.
        \item Please see the NeurIPS code and data submission guidelines (\url{https://nips.cc/public/guides/CodeSubmissionPolicy}) for more details.
        \item While we encourage the release of code and data, we understand that this might not be possible, so “No” is an acceptable answer. Papers cannot be rejected simply for not including code, unless this is central to the contribution (e.g., for a new open-source benchmark).
        \item The instructions should contain the exact command and environment needed to run to reproduce the results. See the NeurIPS code and data submission guidelines (\url{https://nips.cc/public/guides/CodeSubmissionPolicy}) for more details.
        \item The authors should provide instructions on data access and preparation, including how to access the raw data, preprocessed data, intermediate data, and generated data, etc.
        \item The authors should provide scripts to reproduce all experimental results for the new proposed method and baselines. If only a subset of experiments are reproducible, they should state which ones are omitted from the script and why.
        \item At submission time, to preserve anonymity, the authors should release anonymized versions (if applicable).
        \item Providing as much information as possible in supplemental material (appended to the paper) is recommended, but including URLs to data and code is permitted.
    \end{itemize}

\item {\bf Experimental Setting/Details}
    \item[] Question: Does the paper specify all the training and test details (e.g., data splits, hyperparameters, how they were chosen, type of optimizer, etc.) necessary to understand the results?
    \item[] Answer: \answerYes{}.
    \item[] Justification: We have provided detailed experimental settings in Appendix \ref{exp details} to facilitate reproducibility.
    \item[] Guidelines:
    \begin{itemize}
        \item The answer NA means that the paper does not include experiments.
        \item The experimental setting should be presented in the core of the paper to a level of detail that is necessary to appreciate the results and make sense of them.
        \item The full details can be provided either with the code, in appendix, or as supplemental material.
    \end{itemize}

\item {\bf Experiment Statistical Significance}
    \item[] Question: Does the paper report error bars suitably and correctly defined or other appropriate information about the statistical significance of the experiments?
    \item[] Answer: \answerNo{}.
    \item[] Justification: In time series forecasting (TSF) tasks, it is common not to provide error bars but instead directly calculate the average by conducting multiple experiments.
    \item[] Guidelines:
    \begin{itemize}
        \item The answer NA means that the paper does not include experiments.
        \item The authors should answer "Yes" if the results are accompanied by error bars, confidence intervals, or statistical significance tests, at least for the experiments that support the main claims of the paper.
        \item The factors of variability that the error bars are capturing should be clearly stated (for example, train/test split, initialization, random drawing of some parameter, or overall run with given experimental conditions).
        \item The method for calculating the error bars should be explained (closed form formula, call to a library function, bootstrap, etc.)
        \item The assumptions made should be given (e.g., Normally distributed errors).
        \item It should be clear whether the error bar is the standard deviation or the standard error of the mean.
        \item It is OK to report 1-sigma error bars, but one should state it. The authors should preferably report a 2-sigma error bar than state that they have a 96\% CI, if the hypothesis of Normality of errors is not verified.
        \item For asymmetric distributions, the authors should be careful not to show in tables or figures symmetric error bars that would yield results that are out of range (e.g. negative error rates).
        \item If error bars are reported in tables or plots, The authors should explain in the text how they were calculated and reference the corresponding figures or tables in the text.
    \end{itemize}

\item {\bf Experiments Compute Resources}
    \item[] Question: For each experiment, does the paper provide sufficient information on the computer resources (type of compute workers, memory, time of execution) needed to reproduce the experiments?
    \item[] Answer: \answerYes{}.
    \item[] Justification: We provide experiment setting in Appendix \ref{exp details} and complexity analysis in Section \ref{further exp}.
    \item[] Guidelines:
    \begin{itemize}
        \item The answer NA means that the paper does not include experiments.
        \item The paper should indicate the type of compute workers CPU or GPU, internal cluster, or cloud provider, including relevant memory and storage.
        \item The paper should provide the amount of compute required for each of the individual experimental runs as well as estimate the total compute. 
        \item The paper should disclose whether the full research project required more compute than the experiments reported in the paper (e.g., preliminary or failed experiments that didn't make it into the paper). 
    \end{itemize}
    
\item {\bf Code Of Ethics}
    \item[] Question: Does the research conducted in the paper conform, in every respect, with the NeurIPS Code of Ethics \url{https://neurips.cc/public/EthicsGuidelines}?
    \item[] Answer: \answerYes{}.
    \item[] Justification: We follow the NeurIPS Code of Ethics in this paper.
    \item[] Guidelines:
    \begin{itemize}
        \item The answer NA means that the authors have not reviewed the NeurIPS Code of Ethics.
        \item If the authors answer No, they should explain the special circumstances that require a deviation from the Code of Ethics.
        \item The authors should make sure to preserve anonymity (e.g., if there is a special consideration due to laws or regulations in their jurisdiction).
    \end{itemize}

\item {\bf Broader Impacts}
    \item[] Question: Does the paper discuss both potential positive societal impacts and negative societal impacts of the work performed?
    \item[] Answer: \answerYes{}.
    \item[] Justification: In Conclusion, we state that our goal is to offer the machine-learning community a fresh perspective and inspire further research on the essence of time series dynamics.

    \item[] Guidelines:
    \begin{itemize}
        \item The answer NA means that there is no societal impact of the work performed.
        \item If the authors answer NA or No, they should explain why their work has no societal impact or why the paper does not address societal impact.
        \item Examples of negative societal impacts include potential malicious or unintended uses (e.g., disinformation, generating fake profiles, surveillance), fairness considerations (e.g., deployment of technologies that could make decisions that unfairly impact specific groups), privacy considerations, and security considerations.
        \item The conference expects that many papers will be foundational research and not tied to particular applications, let alone deployments. However, if there is a direct path to any negative applications, the authors should point it out. For example, it is legitimate to point out that an improvement in the quality of generative models could be used to generate deepfakes for disinformation. On the other hand, it is not needed to point out that a generic algorithm for optimizing neural networks could enable people to train models that generate Deepfakes faster.
        \item The authors should consider possible harms that could arise when the technology is being used as intended and functioning correctly, harms that could arise when the technology is being used as intended but gives incorrect results, and harms following from (intentional or unintentional) misuse of the technology.
        \item If there are negative societal impacts, the authors could also discuss possible mitigation strategies (e.g., gated release of models, providing defenses in addition to attacks, mechanisms for monitoring misuse, mechanisms to monitor how a system learns from feedback over time, improving the efficiency and accessibility of ML).
    \end{itemize}
    
\item {\bf Safeguards}
    \item[] Question: Does the paper describe safeguards that have been put in place for responsible release of data or models that have a high risk for misuse (e.g., pretrained language models, image generators, or scraped datasets)?
    \item[] Answer: \answerNA{}.
    \item[] Justification: The datasets chosen in this paper are commonly used benchmark datasets for time series forecasting tasks.
    \item[] Guidelines:
    \begin{itemize}
        \item The answer NA means that the paper poses no such risks.
        \item Released models that have a high risk for misuse or dual-use should be released with necessary safeguards to allow for controlled use of the model, for example by requiring that users adhere to usage guidelines or restrictions to access the model or implementing safety filters. 
        \item Datasets that have been scraped from the Internet could pose safety risks. The authors should describe how they avoided releasing unsafe images.
        \item We recognize that providing effective safeguards is challenging, and many papers do not require this, but we encourage authors to take this into account and make a best faith effort.
    \end{itemize}

\item {\bf Licenses for existing assets}
    \item[] Question: Are the creators or original owners of assets (e.g., code, data, models), used in the paper, properly credited and are the license and terms of use explicitly mentioned and properly respected?
    \item[] Answer: \answerYes{}.
    \item[] Justification: Yes, we have.
    \item[] Guidelines:
    \begin{itemize}
        \item The answer NA means that the paper does not use existing assets.
        \item The authors should cite the original paper that produced the code package or dataset.
        \item The authors should state which version of the asset is used and, if possible, include a URL.
        \item The name of the license (e.g., CC-BY 4.0) should be included for each asset.
        \item For scraped data from a particular source (e.g., website), the copyright and terms of service of that source should be provided.
        \item If assets are released, the license, copyright information, and terms of use in the package should be provided. For popular datasets, \url{paperswithcode.com/datasets} has curated licenses for some datasets. Their licensing guide can help determine the license of a dataset.
        \item For existing datasets that are re-packaged, both the original license and the license of the derived asset (if it has changed) should be provided.
        \item If this information is not available online, the authors are encouraged to reach out to the asset's creators.
    \end{itemize}

\item {\bf New Assets}
    \item[] Question: Are new assets introduced in the paper well documented and is the documentation provided alongside the assets?
    \item[] Answer: \answerYes{}.
    \item[] Justification: This paper follows CC 4.0, and the code is in an anonymized URL.
    \item[] Guidelines:
    \begin{itemize}
        \item The answer NA means that the paper does not release new assets.
        \item Researchers should communicate the details of the dataset/code/model as part of their submissions via structured templates. This includes details about training, license, limitations, etc. 
        \item The paper should discuss whether and how consent was obtained from people whose asset is used.
        \item At submission time, remember to anonymize your assets (if applicable). You can either create an anonymized URL or include an anonymized zip file.
    \end{itemize}

\item {\bf Crowdsourcing and Research with Human Subjects}
    \item[] Question: For crowdsourcing experiments and research with human subjects, does the paper include the full text of instructions given to participants and screenshots, if applicable, as well as details about compensation (if any)? 
    \item[] Answer: \answerNA{}.
    \item[] Justification: This paper does not involve crowdsourcing nor research with human subjects.
    \item[] Guidelines:
    \begin{itemize}
        \item The answer NA means that the paper does not involve crowdsourcing nor research with human subjects.
        \item Including this information in the supplemental material is fine, but if the main contribution of the paper involves human subjects, then as much detail as possible should be included in the main paper. 
        \item According to the NeurIPS Code of Ethics, workers involved in data collection, curation, or other labor should be paid at least the minimum wage in the country of the data collector. 
    \end{itemize}

\item {\bf Institutional Review Board (IRB) Approvals or Equivalent for Research with Human Subjects}
    \item[] Question: Does the paper describe potential risks incurred by study participants, whether such risks were disclosed to the subjects, and whether Institutional Review Board (IRB) approvals (or an equivalent approval/review based on the requirements of your country or institution) were obtained?
    \item[] Answer: \answerNA{}.
    \item[] Justification: This paper does not involve crowdsourcing nor research with human subjects.
    \item[] Guidelines:
    \begin{itemize}
        \item The answer NA means that the paper does not involve crowdsourcing nor research with human subjects.
        \item Depending on the country in which research is conducted, IRB approval (or equivalent) may be required for any human subjects research. If you obtained IRB approval, you should clearly state this in the paper. 
        \item We recognize that the procedures for this may vary significantly between institutions and locations, and we expect authors to adhere to the NeurIPS Code of Ethics and the guidelines for their institution. 
        \item For initial submissions, do not include any information that would break anonymity (if applicable), such as the institution conducting the review.
    \end{itemize}

\end{enumerate}

\end{document}